\newcommand{\PreserveBackslash}[1]{\let\temp=\\#1\let\\=\temp}
\newcolumntype{C}[1]{>{\PreserveBackslash\centering}p{#1}}
\newcolumntype{R}[1]{>{\PreserveBackslash\raggedleft}p{#1}}
\newcolumntype{L}[1]{>{\PreserveBackslash\raggedright}p{#1}}
\journal{Pattern Recognition}
\begin{document}

\begin{frontmatter}



\title{Revisiting Competitive Coding Approach for Palmprint Recognition: \\ A Linear Discriminant Analysis Perspective}


\author{Lingfei Song}
\author{Hua Huang\corref{*}}
\cortext[*]{Corresponding author. E-mail addresses: lingfei@bit.edu.cn (L. Song), huahuang@bit.edu.cn (H. Huang).}

\address{School of Computer Science and Technology, Beijing Institute of Technology, Beijing, 100081, P.R. China.}

\begin{abstract}
	Competitive Coding approach (CompCode) is one of the most promising methods for palmprint recognition. Due to its high performance and simple formulation, it has been continuously studied for many years. However, although numerous variations of CompCode have been proposed, a detailed analysis of the method is still absent. 
	In this paper, we provide a detailed analysis of CompCode from the perspective of linear discriminant analysis (LDA) at the first time. A non-trivial sufficient condition under which the CompCode is optimal in the sense of Fisher's criterion is presented. 
	Based on our analysis, we examined the statistics of palmprints and conclude that CompCode deviates from the optimal condition. To mitigate the deviation, we propose a new method called ``Class-Specific CompCode" that improves CompCode by excluding non-palm-line areas from matching. A non-linear mapping on the competitive code is also applied in this method to further enhance accuracy. 
	Experiments on two public databases demonstrate the effectiveness of the proposed method.
\end{abstract}

\begin{keyword}
Palmprint recognition \sep Linear discriminant analysis \sep Non-palm-line areas \sep Non-linear mapping
\end{keyword}

\end{frontmatter}

\graphicspath{{figures/}}

\section{Introduction} \label{sec:introduction}
Palmprint contains many stable and discriminative features, which make it suitable for personal identification \cite{jain2004introduction}. According to the scale, palmprint features can be categorized into ridge based ones and winkle and principle line based ones. Ridge-based palmprint features, such as singular points and minutiae points, however, can only be obtained from high-resolution palmprint images (${>}$400dpi), whereas wrinkle and principle line based palmprint features, such as the orientation of wrinkles and principle lines, can be obtained from low-resolution images (${<}$100dpi) \cite{zhang2003online}. 
Early works on palmprint recognition usually concentrate on high-resolution palmprint images due to its important role in law enforcement \cite{ wu1997pyramid, shu1998automated, zhang1999two, shu2001automatic, duta2002matching, you2002hierarchical,  han2003personal}. However, for civil and commercial applications, low-resolution palmprint images are more suitable, because they are easy to capture using CCD cameras. 

The first seminal method that works on low-resolution palmprint images is PalmCode \cite{zhang2003online}, which is inspired by IrisCode for iris recognition \cite{daugman1993high, daugman2003random, daugman2007new, daugman2009iris}. PalmCode employs a single Gabor filter to extract the local phase information of palmprints and then uses the Hamming distance to measure the similarity between two PalmCodes at the matching stage. The success of PalmCode encourages a huge amount of researches in coding-based palmprint recognition methods \cite{ adams2004competitive, jia2008palmprint, kong2009a, guo2009palmprint, zuo2010multiscale, fei2016double-orientation, fei2016half,  xu2018discriminative}. Among them, the most popular and effective one is Competitive Coding (CompCode) approach \cite{adams2004competitive}. 
Instead of the local phase information, CompCode extracts the orientation of palm lines as palmprint features. Since the palmprint is full of lines and wrinkles with rich distinctive orientation information, CompCode stays as the most successful palmprint recognition method in many years \cite{guo2009palmprint, fei2016double-orientation, xu2018discriminative}. Figure~\ref{fig:flowchart} displays the working flow of CompCode. For two palmprints to compare, CompCode first extracts the orientation features of them, then, at the matching stage, calculates the angular distance between these two palmprints. The decision whether these two palmprints belong to the same person will be made according to the matching distance.

CompCode has attracted a lot of research attention over the past decade, and numerous variations of CompCode have been proposed. The most representative works include robust line orientation code method (RLOC) \cite{jia2008palmprint}, binary orientation co-occurrence vector method (BOCV) \cite{guo2009palmprint}, sparse multi-scale competitive code approach (SMCC) \cite{zuo2010multiscale}, double orientation code method (DOC) \cite{fei2016double-orientation}, half orientation code method (HOC) \cite{fei2016half}, discriminative and robust competitive code approach (DRCC) \cite{xu2018discriminative}, and so on. These methods improve CompCode in a way of more robustly extracting the orientation features of palmprints. 

However, despite these numerous variations of CompCode, a detailed analysis of the method is still absent, which makes it unclear why the CompCode is so effective and how to further improve it. Specifically, CompCode makes a decision simply by calculating the angular distance between two palmprints, the question is whether the decision method is optimal under a certain criterion. If so, which the criterion is, if not, how to improve it? Due to CompCode's foundation role in palmprint recognition, clarifying these two questions will definitely benefit the design of new coding-based methods. 

\begin{figure}[t]
	\centering
	\includegraphics[width=1.0\linewidth]{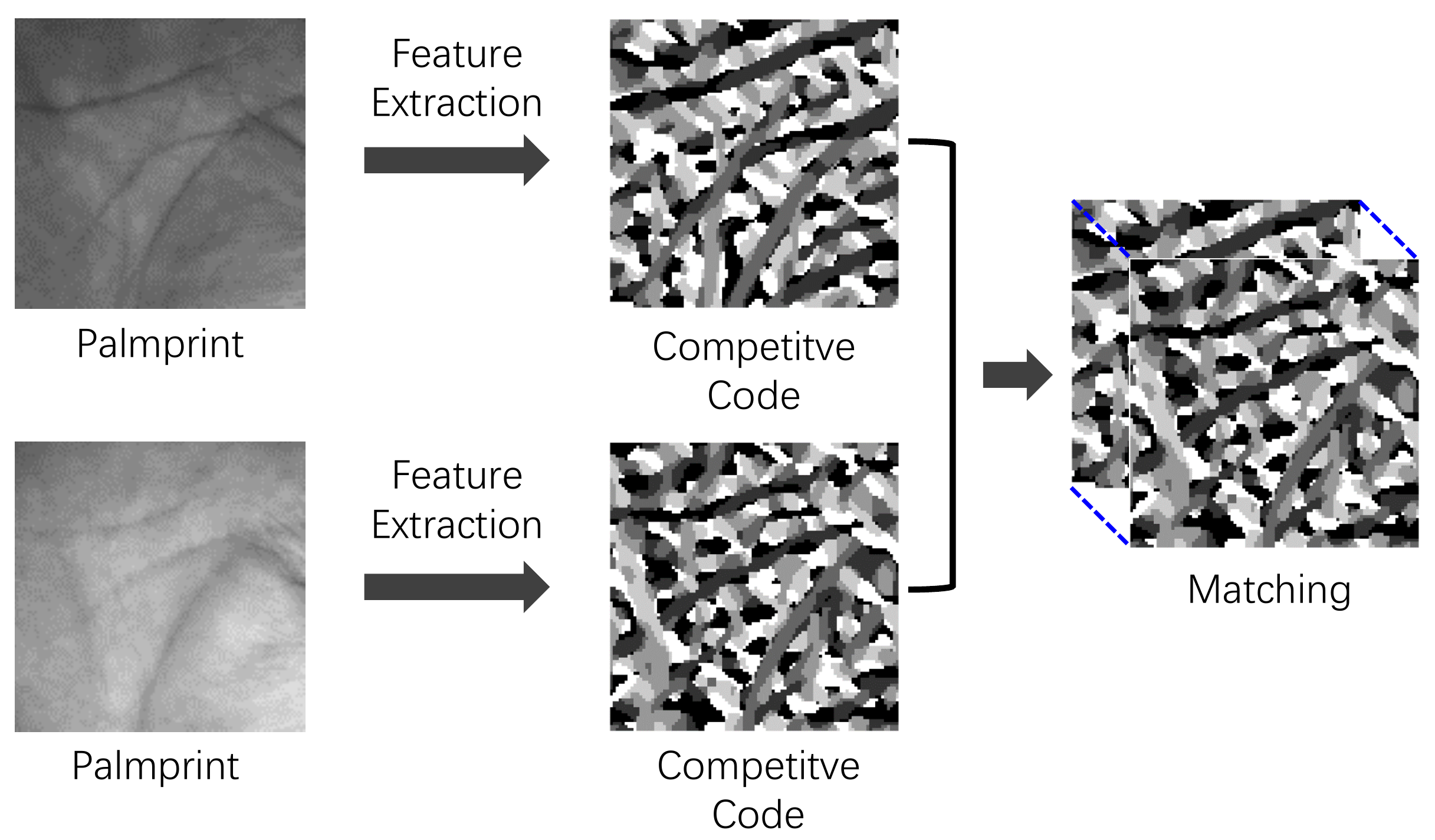}
	
	\caption{The working flow of CompCode. For two palmprints to compare, the orientation features are first extracted, then the distance between these two palmprints is calculated at the matching stage.}
	
	\label{fig:flowchart}
\end{figure}

In this paper, we provide a detailed analysis of CompCode from the perspective of linear discriminant analysis (LDA) at the first time. Our focus is on the matching stage with the assumption that the features of palmprints have been robustly extracted. This approach makes sense because feature extraction has been steadily improved in prior works whereas the matching almost remains the same in all the coding-based methods. We show that the matching in CompCode is actually equal to a linear classifier that classifies the palmprints into two classes by a hyperplane. The normal direction of the hyperplane is, by accident, the diagonal of a high dimensional space,  since the matching treats each pixel (or dimension) in an indiscriminative way. Note that such a classifier is not learned from training data but is empirically determined by authors (of CompCode). 

To test the optimality of the classifier, we first give a non-trivial sufficient condition under which the above classifier is optimal in the sense of Fisher's criterion. Our result is that if the input vector (feature) has a constant mean with respect to indexes and a stationary variance, then the linear classifier with a weight vector of the diagonal of the input vector space is optimal in the sense of Fisher's criterion. Then, by examining the statistics of palmprints, we find that the classifier used in CompCode, though nearly approach, deviates from the optimal condition. This may account for the effectiveness of CompCode and gives room to improvement.

In order to mitigate the deviation, we propose a new coding-based method called ``Class-Specific CompCode" (CSCC), which improves CompCode by excluding non-palm-line areas from the CompCode matching. In addition, we also studied the optimal coding scheme in CompCode, which shows that a non-linear mapping on the competitive code can further enhance the accuracy. Experiments on public databases demonstrate the effectiveness of the proposed method. Besides, our experiments also shows that the proposed two improvement strategies can benefit other coding-based methods.

The remainder of this paper is organized as follows. Section~\ref{sec:related} reviews Competitive Coding (CompCode) approach and other coding-based palmprint recognition methods. Section~\ref{sec:lda} gives an analysis of CompCode from the view of LDA. We propose a new coding-based palmprint method in Section~\ref{sec:improvement}. Section~\ref{sec:experiments} demonstrates the effectiveness of the proposed method on public databases. Finally, Section~\ref{sec:conclusion} concludes this paper.

\section{Related Work} \label{sec:related}
In this section, we will give a detailed review on CompCode. Other representative coding-based palmprint recognition methods will be briefly reviewed, since their computational architectures are the same as that of CompCode. 

\subsection{Competitive Coding (CompCode) approach}
CompCode uses six Gabor filters to capture the orientation features of palm lines. The captured orientation features are encoded into six integers, i.e., ${\{0,1,2,3,4,5\}}$, called Competitive Code. At the matching stage, the angular distance between two Competitive Codes are calculated. The decision whether these two palmprints belong to the same person is made according to the matching distance. 

Let's first look at the feature extraction. The general form of the Gabor filter is
\begin{equation}\label{eq:1}
	\begin{aligned}
		G(x,y,\theta,u,\sigma) = &\frac{1}{2\pi\sigma^2} \exp\{-\frac{x^2+y^2}{2\sigma^2}\} \cdot \\
								   &\exp\{2\pi i u(x \cos\theta + y \sin\theta)\},
	\end{aligned}
\end{equation}
where ${x}$, ${y}$ are the coordinates of 2-D space, ${\theta}$ controls the orientation of the function, ${u}$ is the frequency of the sinusoidal wave, and ${\sigma}$ is the standard deviation of the Gaussian envelope. To make it more robust against brightness, a discrete Gabor filter, ${G[i,j,\theta,u,\sigma]}$, is turned to zero DC (direct current) with the application of the following formula
\begin{equation}\label{eq:2}
	\bar{G}[i,j,\theta,u,\sigma] = G[i,j,\theta,u,\sigma] - 
									 \frac{\sum_{i=-n}^{n}{\sum_{j=-n}^{n}{G[i,j,\theta, u,\sigma]}}}{(2n+1)^2},
\end{equation}
where ${(2n+1)^2}$ is the size of the filter. The adjusted Gabor filter is used to filter the pre-processed palmprint images.

It should be noted that Gabor filter's ability to capture the orientation information of palm lines is based on the assumption that palm lines have an upside-down Gaussian shape, which is given by
\begin{equation}\label{eq:3}
	L(x,y)=A \cdot [1 - \exp(-\frac{x^2}{2\sigma_{L}^2})] + B,
\end{equation}
where ${A}$ controls the magnitude of the line, which depends on the contrast of palmprint images, ${\sigma_{L}}$ is the standard deviation of the profile, which can be considered as the width of palm lines, and ${B}$ is the brightness of the line, which relies on the brightness of palmprint images. 

The filter response on the middle of the line, i.e., ${x=0}$, is
\begin{equation}\label{eq:4}
R(\delta\theta)=\bar{G}\otimes L \propto 
-A \exp \{-\pi^2 u^2\frac{2\sigma^4}{\sigma^2+\sigma_{L}^{2}}\sin^2\delta\theta\},
\end{equation}
where ${\otimes}$ represents two dimensional convolution, ${\delta \theta}$ is the difference between the orientation of palm line and Gabor filter. According to Eq.~\eqref{eq:4}, the filter response value ${R(\delta \theta)}$ reaches minimum when filter has the same orientation as palm line, that is, ${\delta\theta=\pi/2}$. Six discrete orientations, e.g., ${k\pi/6, (k=0,1,\dots,5)}$, are used in CompCode. The Competitive Code ${C}$ of a palmprint image is defined as 
\begin{equation}\label{eq:5}
	C[i,j] = \arg \min_{k}\{I[i,j] \otimes \bar{G}[i,j,\theta_{k},u,\sigma]\},
\end{equation}
where ${I}$ is a pre-processed palmprint image, and ${\otimes}$ represents 2-D convolution. 

At the matching stage, the distance between two palmprints is calculated. Let ${C_1}$, ${C_2}$ be Competitive Codes of two palmprints ${PLM_1}$, ${PLM_2}$, respectively. The matching difference ${\Delta}$ between these two Competitive Codes is defined by pixel-to-pixel angular distance, that is,
\begin{equation} \label{eq:6}
	\Delta[i,j] = \min\{|C_{1}[i,j]-C_{2}[i,j]|, 6-|C_{1}[i,j] - C_{2}[i,j]|\}.
\end{equation}
Finally, the distance between two palmprints is calculated by averaging among all the points in ${\Delta}$, that is, 
\begin{equation}\label{eq:7}
	d(PLM_{1},PLM_{2})=\frac{\sum_{i=0}^{N-1}{\sum_{j=0}^{N-1}{\Delta[i,j]}}}{N^2},
\end{equation}
where ${N^2}$ is the size of palmprint image.

The matching is counted as \emph{genuine matching} if both samples are from the same person, otherwise, it is viewed as \emph{impostor matching}. Figure~\ref{fig:compcode_dis} displays genuine and impostor matching distance of CompCode.  Generally, the genuine matching distance is much smaller than impostor matching distance. Thus, we can simply set a threshold to determine whether two palmprints belongs to the same person. 

\subsection{Other Coding Based Methods}
Based on CompCode, many similar coding-based methods have been proposed \cite{jia2008palmprint, guo2009palmprint, zuo2010multiscale, fei2016double-orientation, fei2016half, xu2018discriminative}. These methods share the same computational architecture as CompCode but improve the CompCode in a way of more robustly extracting the orientation features. We will give a brief summary of these methods in the following.

To robustly extract orientation features, robust line orientation code method (RLOC) \cite{jia2008palmprint} adopts modified finite Randon transform, instead of Gabor filters, in feature extraction. Binary orientation co-occurrence vector method (BOCV) \cite{guo2009palmprint} states that using only one dominant orientation to represent a local region may lose some valuable information and proposes a novel feature extraction algorithm to represent multiple orientations for a local region. To go further, sparse multi-scale competitive code approach (SMCC) \cite{zuo2010multiscale} proposes a compact representation of multiscale palm line orientation features. The SMCC method first defines a filter bank of second derivatives of Gaussians with different orientations and scales and then uses the ${l_{1}}$-norm sparse coding to obtain a robust estimation of the multiscale orientation field. Double orientation code method (DOC) \cite{fei2016double-orientation} proposes a novel double-orientation code scheme to represent the orientation features using two dominant orientations. Half orientation code method (HOC) \cite{fei2016half} improves DOC by a bank of ``half-Gabor" filters. Recently, a discriminative and robust competitive code approach (DRCC) \cite{xu2018discriminative} combines the orientation features with a side code to accurately and robustly represent the palmprints.

Although CompCode has been improved to some extent by these methods, a detailed analysis of the method is still absent, which makes it unclear why the CompCode is so effective and how to further improve it. 
In the next section, we will try to explain CompCode from the view of linear discriminant analysis (LDA). Based on our analysis, an improved version of CompCode will be presented in Section~\ref{sec:improvement}.

\begin{figure}[t]
	\centering
	\includegraphics[width=0.96\linewidth]{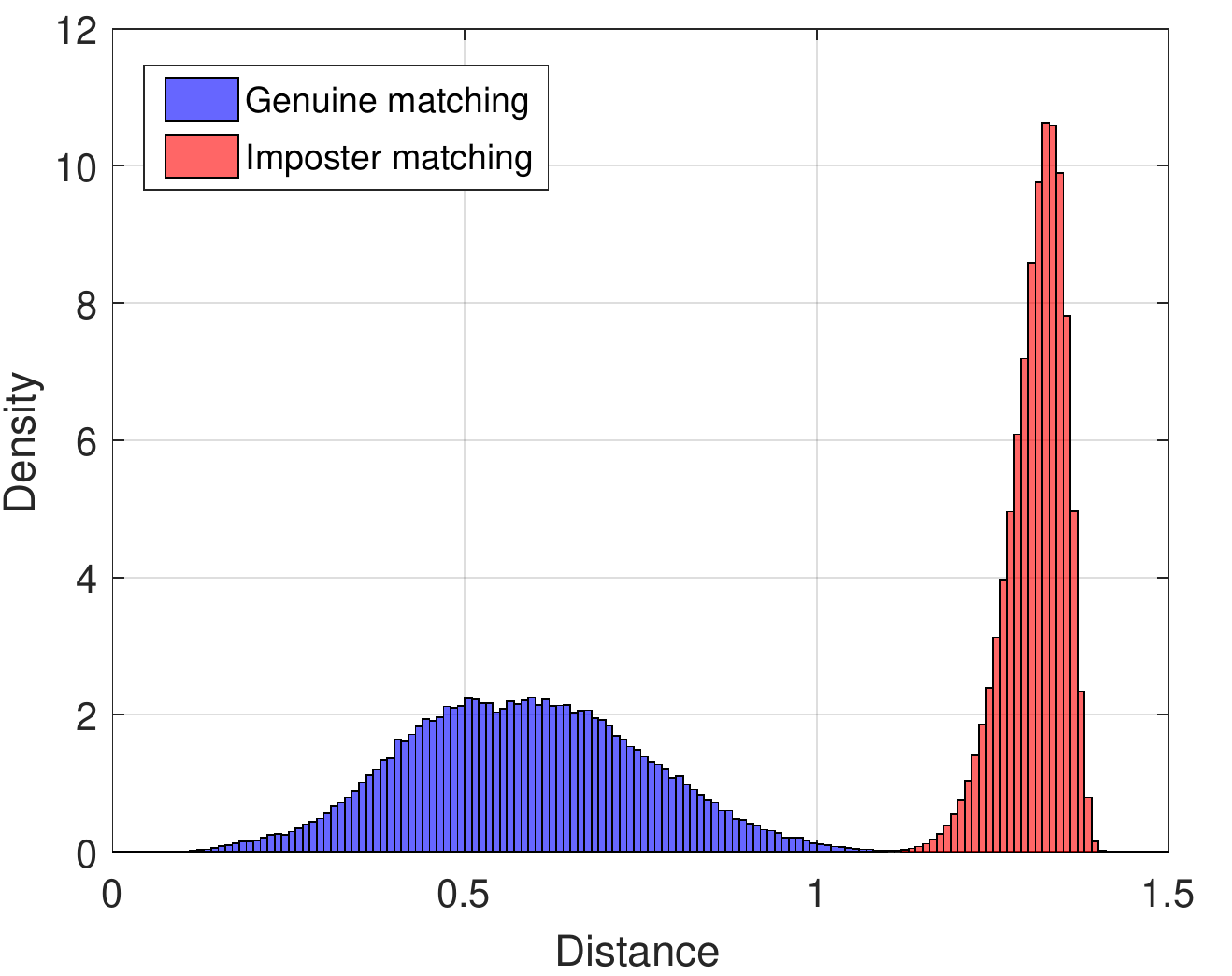}
	
	\caption{The genuine and impostor matching distance of CompCode.}
	\label{fig:compcode_dis}
\end{figure}

\section{Understanding CompCode from the View of LDA} \label{sec:lda}
In this section, we will analyze CompCode from the perspective of Linear Discriminant Analysis (LDA). Our focus is on the matching stage of CompCode assuming that the features have been robustly extracted. We find that the matching in CompCode is actually a linear classifier with a weight vector ${[1,1,...,1]^{\intercal}_{\tiny N\times N}}$. To test the optimality of this classifier, a non-trivial sufficient condition under which the classifier is optimal in the sense of Fisher's criterion is presented, and the statistics of palmprints is examined.

\subsection{Interpretation as a Linear Classifier}
Our analysis starts from Eq.~\eqref{eq:6}, which defines the matching difference ${\Delta}$ between two palmprints. For convenience, we call this matching difference as genuine matching difference if the matching is genuine, otherwise, the matching difference is termed as impostor matching difference.

Figure~\ref{fig:features} shows an example of the genuine matching difference and impostor matching difference. It can be seen that the genuine matching difference and impostor matching difference are very unlike. This is the foundation for palmprint recognition. Generally speaking, the genuine matching difference has lower pixel values than the impostor matching difference. Thus, it is tempting to categorize a matching as genuine or impostor according to the average pixel value of their matching difference. This is exactly what CompCode does. 

As previously mentioned, CompCode calculates the distance between two palmprints by averaging all the pixels in their matching difference (Eq.~\eqref{eq:7}).
Then, by simply setting a threshold, the matching is counted as genuine or impostor. This averaging method is very straightforward with an advantage of easy-to-understand. However, it involves no statistics of palmprints at all, which makes further analysis very difficult.

\begin{figure}[t]
	\centering
	\subfigure[Genine matching]{
		\includegraphics[width=0.415\linewidth]{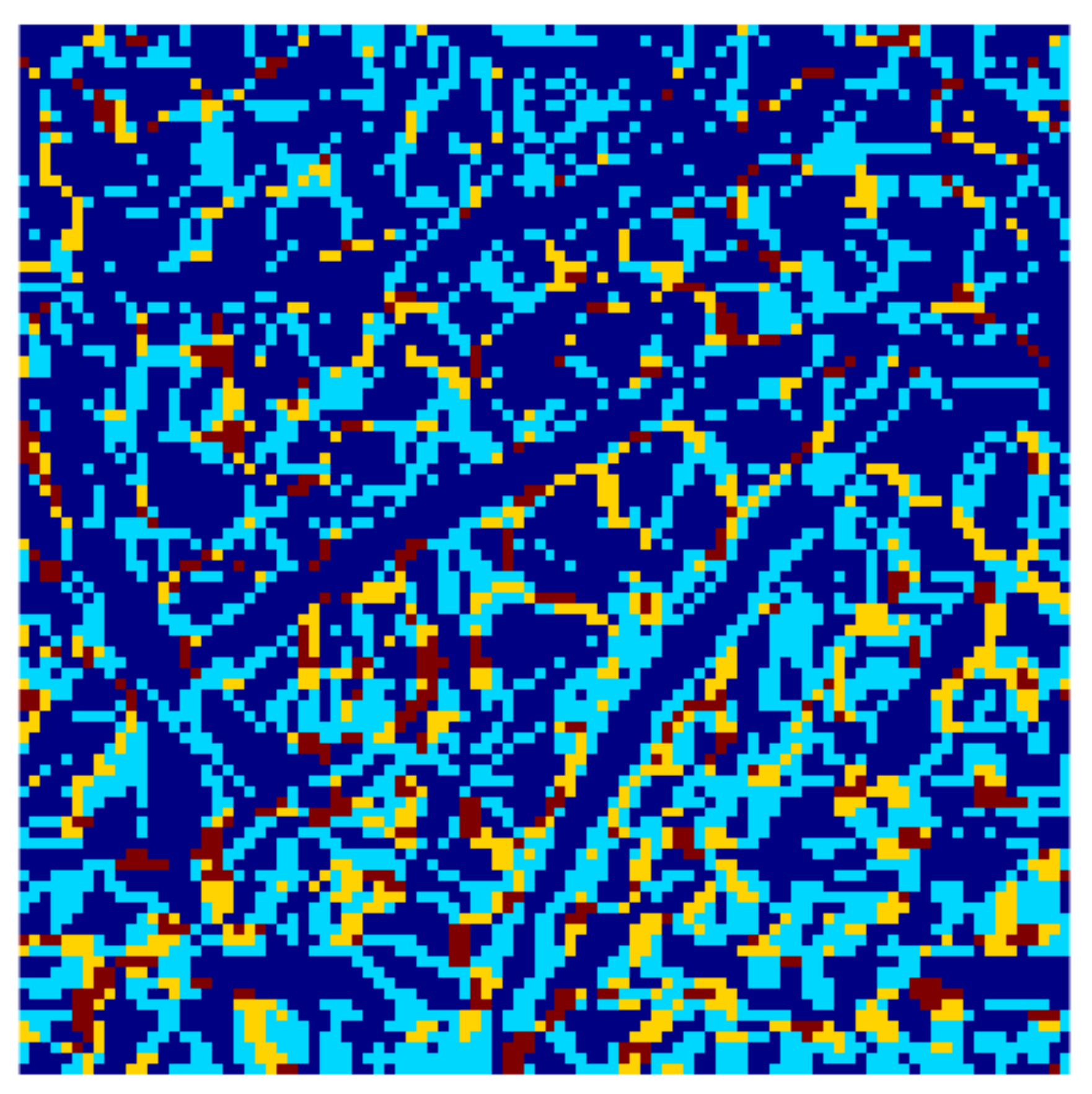}
	}
	\subfigure[Impostor matching ~~~~~~~]{
		\includegraphics[width=0.466\linewidth]{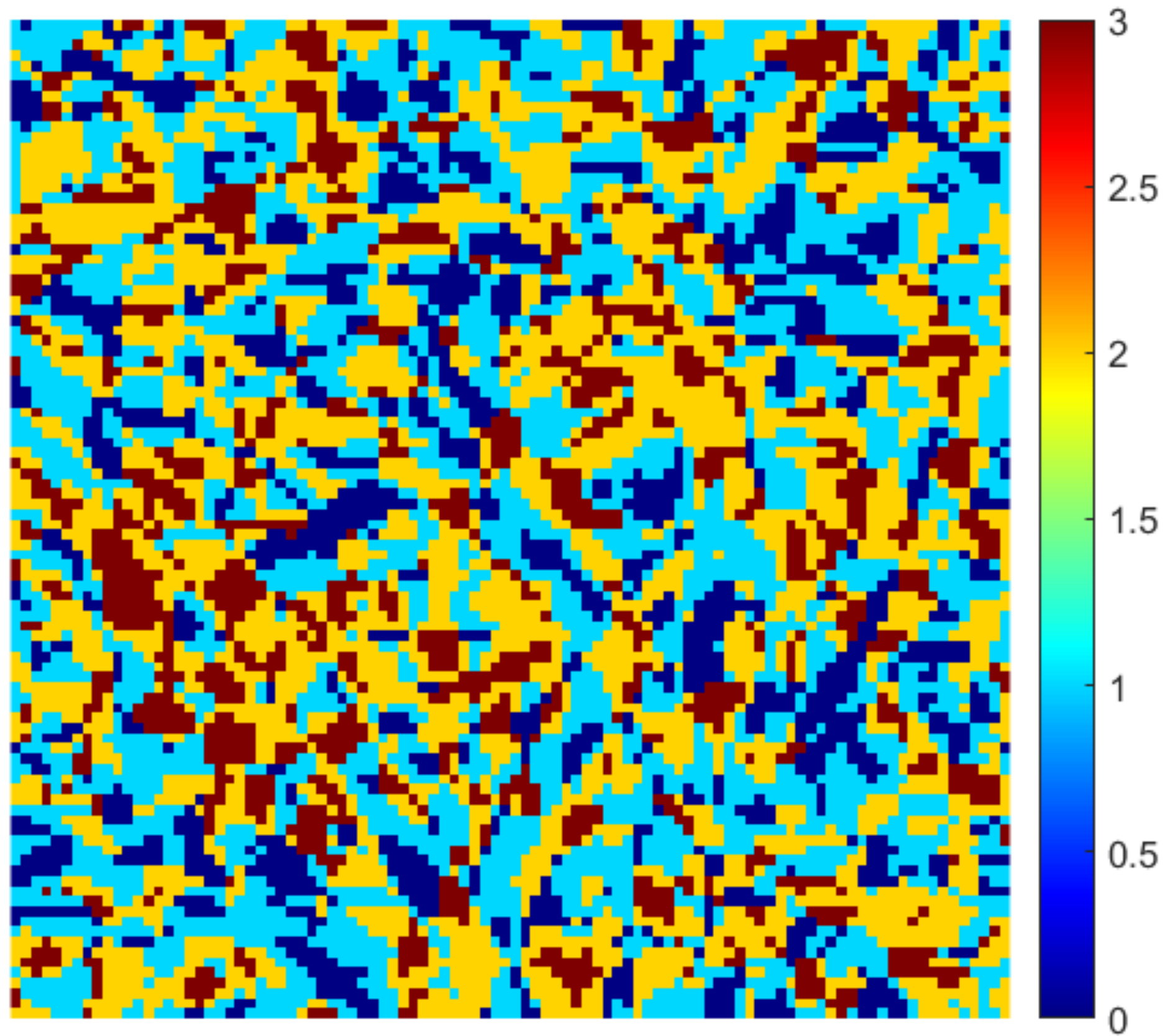}
	}
	\caption{An example of the genuine and impostor matching difference.}
	\label{fig:features}
\end{figure}

By simple manipulations of symbols in Eq.~\eqref{eq:7}, it will be clear that the matching in CompCode is actually equal to a linear classifier with a weight vector ${[1,1,...,1]^{\intercal}_{N \times N}}$, i.e., 
\begin{equation} \label{eq:9}
	g(\Delta) = \frac{1}{N^2} \cdot [1,1,...,1]_{N \times N} \cdot \text{Vec}(\Delta),
\end{equation}
where ${\text{Vec}(\cdot)}$ vectorizes the two-dimensional matching difference ${\Delta}$. Eq.~\eqref{eq:9} associates the above averaging method, which is used in CompCode, with a linear classifier. It can be seen that classification by a hyperplane is equivalent to projecting the features along the normal direction of the hyperplane. 
Figure~\ref{fig:lda} shows a low dimensional example. On the one hand, we can use a hyperplane to categorize the genuine matching and the impostor matching. On the other hand, we can first project the features along the normal direction of the hyperplane and then simply use a threshold to make a decision. These two are equivalent.

It should be noted that the weight vector ${[1,1,...,1]^{\intercal}_{N \times N}}$ used in CompCode is not learned from training data but is empirically determined by the author. This raises a question of whether this linear classifier is optimal under a certain criterion. Answering this question needs to first choose a criterion. The way to qualify the weight vector in a linear classifier is the well known Fisher's criterion \cite{friedman2001elements}, which states that along the direction of the weight vector the ratio of the inter-class variance to the intra-class variance should be maximized. 

Denoting the genuine matching difference as ${\Delta_{\text{genu}}}$, the impostor matching difference as ${\Delta_{\text{impo}}}$, a weight vector ${\mathbf{w}}$ that satisfies Fisher's criterion obeys the following equation \cite{friedman2001elements}
\begin{equation} \label{eq:10}
	\mathbf{w} \propto (\Sigma_{\text{genu}} + \Sigma_{\text{impo}})^{-1}(\mathbf{\mu}_{\text{genu}} - \mathbf{\mu}_{\text{impo}}),
\end{equation}
where 
\begin{equation}
	\begin{aligned}
		\Sigma_{\text{genu}}~ =& ~~\text{Var}[\text{Vec}(\Delta_{\text{genu}})] \\
		\Sigma_{\text{impo}}~ =& ~~\text{Var}[\text{Vec}(\Delta_{\text{impo}})] \\
		\mathbf{\mu}_{\text{genu}}~ =& ~~~~~\text{E}[\text{Vec}(\Delta_{\text{genu}})] \\
		\mathbf{\mu}_{\text{impo}}~ =& ~~~~~\text{E}[\text{Vec}(\Delta_{\text{impo}})],
	\end{aligned}
\end{equation}
in which ${\text{Var}[\cdot]}$ represents co-variance and ${\text{E}[\cdot]}$ represents expectation. 

Eq.~\eqref{eq:10} shows that whether ${[1,1,...,1]^{\intercal}_{N \times N}}$ is optimal in the sense of Fisher's criterion depends on the statistical properties of ${\Delta_{\text{genu}}}$ and ${\Delta_{\text{impo}}}$. In the next sub-section, we will give a non-trivial sufficient condition under which the weight vector ${[1,1,...,1]^{\intercal}_{N \times N}}$ is optimal. 

\begin{figure}[t]
	\centering
	\includegraphics[width=0.96\linewidth]{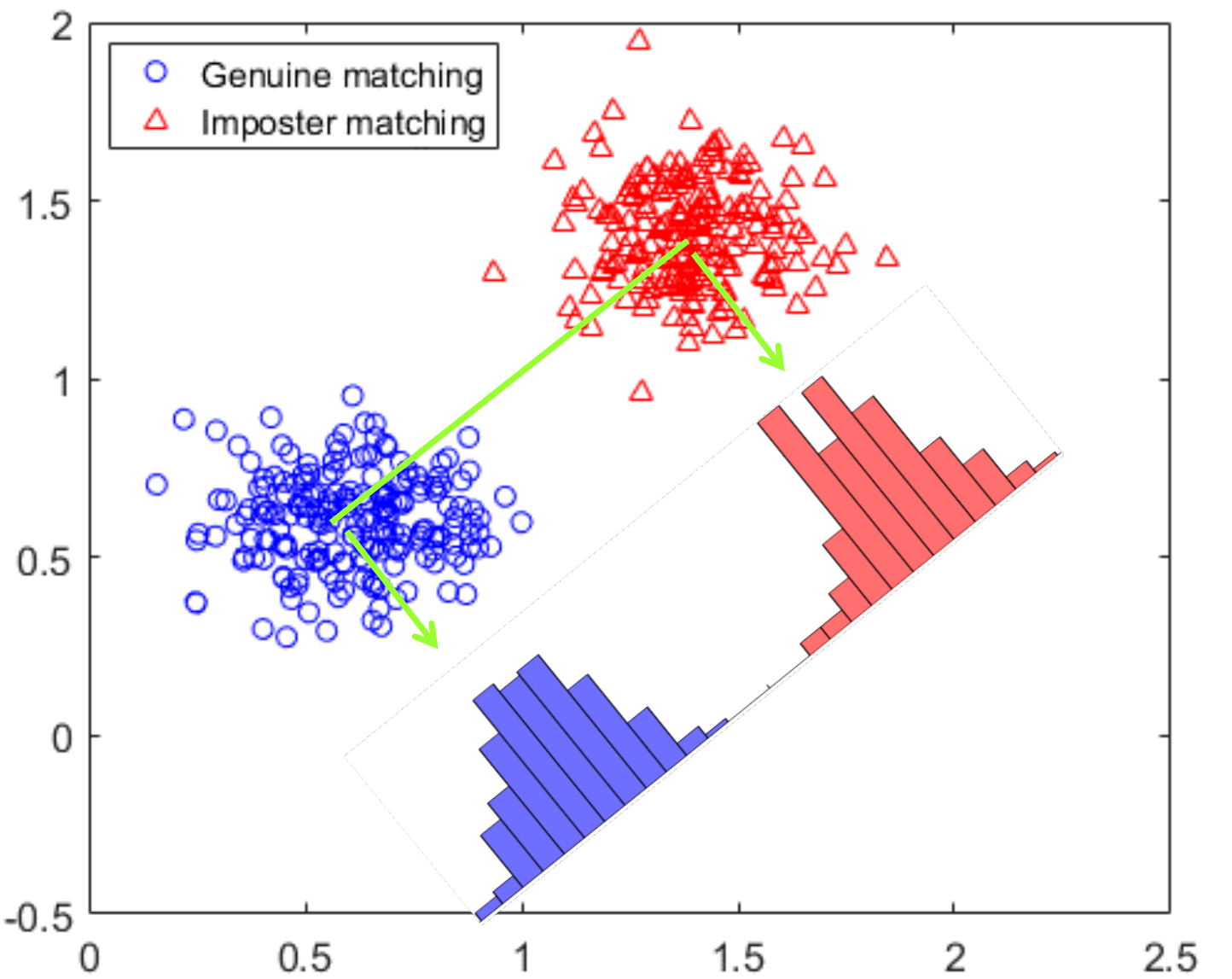}
	\caption{A low dimensional example of linear classification. Classification by a hyperplane is equivalent to projecting the features along the normal direction of the hyperplane and then setting a threshold.}
	\label{fig:lda}
\end{figure}

\subsection{A Non-trivial Sufficient Condition}
Before giving a non-trivial sufficient condition, it may be convenient to first consider a trivial condition to intuitively figure out how the statistical properties of ${\Delta_{\text{genu}}}$ and ${\Delta_{\text{impo}}}$ decides the optimal weight vector ${\mathbf{w}}$.

Suppose that the elements in ${\Delta_{\text{genu}}}$ are independent and identically distributed, the elements in ${\Delta_{\text{impo}}}$ are also independent and identically distributed. These two distributions are different. Then, the optimal weight vector is obviously ${\mathbf{w}=[1,1,...,1]^{\intercal}_{N \times N}}$. The proof of this conclusion is very straightforward. First, since the elements are identically distributed, ${\mathbf{\mu}_{\text{genu}} - \mathbf{\mu}_{\text{impo}}}$ is obviously proportional to ${[1,1,...,1]^{\intercal}_{N \times N}}$. Second, because the elements are independent and identically distributed, the co-variance matrices (${\Sigma_{\text{genu}}}$ and ${\Sigma_{\text{impo}}}$) are proportional to the unit matrix, and their inverse matrices are also proportional to the unit matrix. Thus, the optimal weight vector ${\mathbf{w}}$ is proportional to the multiply of a unit matrix and a vector of ${[1,1,...,1]^{\intercal}_{N \times N}}$, which is ${[1,1,...,1]^{\intercal}_{N \times N}}$. Actually, even if the above proof is missing, we can also get the right answer according to our intuition because there is no reason to treat the elements discriminatively since they are independent and identically distributed. 

The independent and identically distributed condition is so strict that is actually no use in practice. A quick glance at Figure~\ref{fig:features} will suggest that the elements in ${\Delta_{\text{genu}}}$ and ${\Delta_{\text{impo}}}$ are not independent and identically distributed. We need to relax the condition to make it approach the real distribution. 

We noticed that Fisher's criterion only involves the first-order (expectation) and the second-order (variance) properties of the distribution, thus, it may be possible to construct a non-trivial sufficient condition that only constrains the expectation and co-variance of ${\Delta_{\text{genu}}}$ and ${\Delta_{\text{impo}}}$. Indeed, we find that if the expectation of ${\Delta_{\text{genu}}}$ (and ${\Delta_{\text{impo}}}$) is constant with respect to indexes, and the variance of ${\Delta_{\text{genu}}}$ (and ${\Delta_{\text{impo}}}$) is stationary, then the optimal weight vector is ${\mathbf{w}=[1,1,...,1]^{\intercal}_{N \times N}}$. This non-trivial sufficient condition can be more precisely expressed as the following proposition.

\newtheorem{theorem}{\bf Proposition}
\begin{theorem} \label{Theorem:1}
	\rm{If ${\Delta_{\text{genu}}}$ and ${\Delta_{\text{impo}}}$ satisfy the following conditions:}
	\begin{enumerate}
		\item[${\cdot}$] ${\text{E}(\Delta_{\text{genu}}[i,j]) = m_{\text{genu}}}$,
		\item[${\cdot}$] ${\text{E}(\Delta_{\text{impo}}[i,j]) = m_{\text{impo}}}$,
		\item[${\cdot}$] ${\text{E}((\Delta_{\text{genu}}[i,j]-m_{\text{genu}}) (\Delta_{\text{genu}}[i^{'},j^{'}] - m_{\text{genu}})) = \sigma_{\text{genu}}(\delta i, \delta j)}$, 
		\item[${\cdot}$] ${\text{E}((\Delta_{\text{impo}}[i,j]-m_{\text{impo}}) (\Delta_{\text{impo}}[i^{'},j^{'}] - m_{\text{impo}})) = \sigma_{\text{impo}}(\delta i, \delta j)}$,
	\end{enumerate}
	then ${(\Sigma_{\text{genu}} + \Sigma_{\text{impo}})^{-1}(\mathbf{\mu}_{\text{genu}} - \mathbf{\mu}_{\text{impo}}) \propto [1,1,...,1]^{\intercal}_{N \times N}}$.
\end{theorem}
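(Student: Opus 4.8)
The plan is to decompose the target expression $(\Sigma_{\text{genu}} + \Sigma_{\text{impo}})^{-1}(\mathbf{\mu}_{\text{genu}} - \mathbf{\mu}_{\text{impo}})$ into its two factors and show that each one contributes only a scalar multiple of the all-ones vector, which I abbreviate as $\mathbf{1} := [1,1,\dots,1]^{\intercal}_{N\times N}$. First I would dispatch the mean difference, which is the easy factor. Conditions (i) and (ii) assert that the entrywise expectations are constant across all indices, so after vectorization $\mathbf{\mu}_{\text{genu}} = m_{\text{genu}}\,\mathbf{1}$ and $\mathbf{\mu}_{\text{impo}} = m_{\text{impo}}\,\mathbf{1}$, whence $\mathbf{\mu}_{\text{genu}} - \mathbf{\mu}_{\text{impo}} = (m_{\text{genu}} - m_{\text{impo}})\,\mathbf{1}$ is already proportional to $\mathbf{1}$.

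The whole problem therefore reduces to proving that $\mathbf{1}$ is an eigenvector of $(\Sigma_{\text{genu}} + \Sigma_{\text{impo}})^{-1}$, equivalently of $\Sigma_{\text{genu}} + \Sigma_{\text{impo}}$ itself: if $(\Sigma_{\text{genu}}+\Sigma_{\text{impo}})\,\mathbf{1} = \lambda\,\mathbf{1}$ with $\lambda \neq 0$, then $(\Sigma_{\text{genu}}+\Sigma_{\text{impo}})^{-1}\mathbf{1} = \lambda^{-1}\mathbf{1}$, and multiplying by the mean difference keeps the result proportional to $\mathbf{1}$. (Positive semidefiniteness of the covariance sum, together with the implicit invertibility assumption, guarantees $\lambda>0$.) Moreover it suffices to show that each summand maps $\mathbf{1}$ to a multiple of itself, since $\Sigma_{\text{genu}}\mathbf{1} = \lambda_1\mathbf{1}$ and $\Sigma_{\text{impo}}\mathbf{1} = \lambda_2\mathbf{1}$ give $(\Sigma_{\text{genu}}+\Sigma_{\text{impo}})\mathbf{1} = (\lambda_1+\lambda_2)\mathbf{1}$.

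Next I would attack the covariance factor by establishing that each covariance matrix has \emph{constant row sums}. Conditions (iii) and (iv) state that every covariance entry depends only on the index offsets $(\delta i,\delta j)$, so the row sum of $\Sigma_{\text{genu}}$ indexed by position $(i,j)$ is $\sum_{i',j'}\sigma_{\text{genu}}(i-i',\,j-j')$. Reading the stationarity assumption in the wide-sense sense with the usual periodic boundary convention, the vectorized covariance becomes a block-circulant-with-circulant-blocks matrix; for each fixed $(i,j)$ the map $(i',j')\mapsto\big((i-i')\bmod N,\,(j-j')\bmod N\big)$ is a bijection onto the full offset set, so the row sum collapses to $\sum_{\delta i,\delta j}\sigma_{\text{genu}}(\delta i,\delta j)$, which is independent of $(i,j)$. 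The same holds for $\Sigma_{\text{impo}}$. Hence $\mathbf{1}$ is the zero-frequency (DC) eigenvector of each matrix, and combining the two factors yields $(\Sigma_{\text{genu}}+\Sigma_{\text{impo}})^{-1}(\mathbf{\mu}_{\text{genu}}-\mathbf{\mu}_{\text{impo}}) \propto \mathbf{1}$.

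The hard part — and the step I would spell out most carefully — is precisely this row-sum computation, because it is where the boundary of the $N\times N$ window matters. Under a strictly finite, non-periodic Toeplitz reading of conditions (iii)–(iv), the row sums are \emph{not} exactly constant: near the borders fewer offset terms fall inside the admissible index range, so $\mathbf{1}$ fails to be an exact eigenvector of a generic symmetric Toeplitz (or doubly-Toeplitz) matrix. I would therefore make explicit that ``stationary variance'' is to be understood in the circular sense (equivalently, argue an interior/large-$N$ regime in which the edge contributions are negligible), which is what turns the covariances into block-circulant matrices and makes the DC mode a genuine eigenvector. Flagging and resolving this boundary subtlety is the crux; the remaining algebra is routine.
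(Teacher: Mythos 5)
Your proposal is correct and shares the paper's overall strategy (constant mean gives $\mathbf{\mu}_{\text{genu}}-\mathbf{\mu}_{\text{impo}} \propto \mathbf{1}$; stationarity gives block-circulant covariances; hence the inverse sends $\mathbf{1}$ to a multiple of itself), but you execute the covariance step by a different and more elementary route. The paper diagonalizes $\Sigma = \Sigma_{\text{genu}}+\Sigma_{\text{impo}}$ in the two-dimensional DFT basis, writes $\Sigma^{-1} = V\Lambda^{-1}V^{H}$, argues that $\Sigma^{-1}$ is itself block-circulant, and only then concludes from its constant row sums that $\Sigma^{-1}\mathbf{1}\propto\mathbf{1}$. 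You shortcut all of this: it suffices that $\mathbf{1}$ is an eigenvector of $\Sigma$ (constant row sums of each summand, which add), since an eigenvector of an invertible matrix is an eigenvector of its inverse with reciprocal eigenvalue; the explicit eigendecomposition and the circulant structure of the \emph{inverse} are never needed. You are also more candid about the one genuine gap present in both arguments: conditions (iii)--(iv) as literally stated yield a (doubly) Toeplitz covariance, not a circulant one, so $\mathbf{1}$ is only an approximate eigenvector because boundary rows lose offset terms. The paper absorbs this with ``$\approx$'' and the phrase ``by little modification,'' whereas you explicitly state that the result is exact only under a circular reading of stationarity or in an interior/large-$N$ regime. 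That caveat is needed for the proposition to hold as an exact identity, and making it explicit is a strength of your write-up rather than a defect.
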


\begin{proof}
	First, since the expectation is constant, it it obvious that 
	\begin{equation}
		\begin{aligned}
			\mathbf{\mu}_{\text{genu}}-\mathbf{\mu}_{\text{impo}} &= \text{E}[\text{Vec}(\Delta_{\text{genu}})] - \text{E}[\text{Vec}(\Delta_{\text{impo}})] \\
			& \propto [1,1,...,1]^{\intercal}_{N \times N}.
		\end{aligned}
	\end{equation}
	
	Second, according to the definition of the co-variance matrix ${\Sigma_{\text{impo}}}$, we will show that ${\Sigma_{\text{impo}}}$ is approximately a block-circulant matrix as follows. 
	\begin{equation} \label{eq:11}
		\begin{aligned}
			\Sigma_{\text{impo}} &= \text{E}[\text{Vec}(\Delta_{\text{impo}} - m_{\text{impo}}) \cdot \text{Vec}(\Delta_{\text{impo}} - m_{\text{impo}})^{\intercal}]		\\
				   =& \left [ 
				   			\begin{matrix}
				   			\small
				   			
				   			B_{0,0}  & B_{0,1}  & \cdots  &         &         &         &        &              \\
				   			B_{1,0}  & B_{1,1}  & B_{1,2} & \cdots  &         &         &        &              \\ 
				   			\cdots   & B_{2,1}  & B_{2,2} & B_{2,3} &  \cdots &         &        &              \\ 
				   			         &          & \ddots  & \ddots  & \ddots  &         &        &              \\
				   			         &          &         & \ddots  & \ddots  & \ddots  &        &              \\
				   			         &          &         &         &         & \cdots  & B_{N-1,N-2} & B_{N-1,N-1}
				   			\end{matrix}
				   	  \right] ,
		\end{aligned}
	\end{equation}
	where ${B_{i,j} = \text{E}\{(\Delta_{\text{impo}}[:,i] - m_{\text{impo}}) \cdot (\Delta_{\text{impo}}[:,j] - m_{\text{impo}})^{\intercal}\} }$ is the co-variance of the ${i}$th column and ${j}$th column of ${\Delta_{\text{impo}}}$. Since the variance of ${\Delta_{\text{impo}}}$ is stationary, ${B_{i,j}}$ only depends on the distance between ${i}$ and ${j}$, i.e., ${B_{i,j} = B_{|i-j|}}$. Therefore, Eq.~\eqref{eq:11} can be reformed as 
	\begin{equation} \label{eq:12}
			\Sigma_{\text{impo}} = \left [ 
						\begin{matrix}
						B_{0}      & B_{1}   & \cdots  &         &         &         &        &              \\
						B_{1}      & B_{0}   & B_{1}   & \cdots  &         &         &        &              \\ 
						\cdots     & B_{1}   & B_{0}   & B_{1}   & \cdots  &         &        &              \\ 
						           &         & \ddots  & \ddots  & \ddots  &         &        &              \\
						           &         &         & \ddots  & \ddots  & \ddots  &        &              \\
						           &         &         &         &         & \cdots  & B_{1}  & B_{0}
						\end{matrix}
					\right] .
	\end{equation}
	By little modification of Eq.~\eqref{eq:12}, it shows that ${\Sigma_{\text{impo}}}$ can be approximated by a block-circulant matrix
	\begin{equation} \label{eq:15}
		\Sigma_{\text{impo}} \approx \left [ 
					\begin{matrix}
					B_{0}      & B_{1}   & \cdots  &         &         &         & \cdots & B_{1}        \\
					B_{1}      & B_{0}   & B_{1}   & \cdots  &         &         &        & \cdots       \\ 
					\cdots     & B_{1}   & B_{0}   & B_{1}   & \cdots  &         &        &              \\ 
							   &         &         & \ddots  & \ddots  & \ddots  &        &              \\
							   &         &         &         & \ddots  & \ddots  & \ddots &              \\
					B_{1}      & \cdots  &         &         &         & \cdots  & B_{1}  & B_{0}
					\end{matrix}
				\right] .
	\end{equation}
	The same method can be used to derive each block ${B_{\delta}}$ (${\delta = 0,1,\cdots }$), which shows that ${B_{\delta}}$ can be approximated by a circulant matrix, i.e., 
	\begin{equation} \label{eq:16}
		B_{\delta} \approx \left [ 
						\begin{matrix}
							b^{\delta}_{0} & b^{\delta}_{1} & \cdots         &                &        &        & \cdots         & b^{\delta}_{1} ~\\
							b^{\delta}_{1} & b^{\delta}_{0} & b^{\delta}_{1} & \cdots         &        &        &                & \cdots \\ 
							\cdots         & b^{\delta}_{1} & b^{\delta}_{0} & b^{\delta}_{1} & \cdots &        &                & \\ 
								           &                &                & \ddots         & \ddots & \ddots &                & \\
									       &                &                &                & \ddots & \ddots & \ddots         & \\
							b^{\delta}_{1} & \cdots         &                &                &        & \cdots & b^{\delta}_{1} & b^{\delta}_{0}
						\end{matrix}
					\right] .
	\end{equation}
	Thus, ${\Sigma_{\text{impo}}}$ is a block-circulant matrix. Similarly, it can be proved that ${\Sigma_{\text{genu}}}$ is also a block-circulant matrix. 
    So far, we know that both ${\Delta_{\text{genu}}}$ and ${\Delta_{\text{impo}}}$ are block-circulant matrices. Obviously, their summation is also block-circulant. 
    
    Without loss of generality, let ${\Sigma = \Sigma_{\text{genu}} + \Sigma_{\text{impo}}}$. Then we will prove that the inverse of ${\Sigma}$ is a block-circulant matrix.
    Applying eigenvalue decomposition on ${\Sigma}$, we get 
    \begin{equation}
    	\Sigma = V \Lambda V^{H},
    \end{equation}
    in which ${\Lambda = \text{diag}(\lambda_1, \lambda_2, \cdots, \lambda_{N\times N})}$ is a diagonal matrix, ${V}$ is an orthogonal matrix whose ${(\alpha N + \beta)}$th column defined by stacking each row of the following matrix 
    \begin{equation}
    	v[k,l] = \exp[2\pi i (\frac{\alpha k + \beta l}{N})], \quad k, l = 0, 1, \cdots, N-1,
    \end{equation}
    with ${\alpha = 0,1,\cdots, N-1}$, ${\beta = 0,1,\cdots, N-1}$, and ${V^H}$ is the conjugate transpose of matrix ${V}$. According to the properties of eigenvalue decomposition, the inverse of ${\Sigma}$ is 
    \begin{equation} \label{eq:13}
    	\Sigma^{-1} = V \Lambda^{-1} V^H,
    \end{equation}
    where ${\Lambda^{-1}}$ is a diagonal matrix with its diagonal elements being the inverse of the diagonal elements in ${\Lambda}$. Eq.~\eqref{eq:13} can be further written as 
    \begin{equation}
		\Sigma^{-1} = \sum_{j = 0}^{N \times N - 1} \lambda_j^{-1} \cdot V[:,j] \cdot V[:,j]^{H}.
    \end{equation}
    It is clear that each item ${V[:,j] \cdot V[:,j]^{H}}$ is a block-circulant matrix due to the periodicity of sinusoidal function. Thus, ${\Sigma^{-1}}$ is also a block-circulant matrix. 
    
    Finally, the multiply of a block-circulant matrix ${\Sigma^{-1}}$ and a vector of ${[1,1,...,1]^{\intercal}_{N \times N}}$ is proportional to ${[1,1,...,1]^{\intercal}_{N \times N}}$ because the elements in each row of ${\Sigma^{-1}}$ are the same. 
\end{proof}

Proposition~\ref{Theorem:1} gives a sufficient condition under which the weight vector ${\mathbf{w} = [1,1,...,1]^{\intercal}_{N \times N}}$ is optimal in the sense of Fisher's criterion. Although this condition is not necessary, we find it very useful in testing the optimality of CompCode and designing new coding-based palmprint recognition methods. In the next subsection, we will test the optimality of CompCode by investigating the statistics of palmprints.

\subsection{Statistics of palmprints}
Our investigations on the statistics of palmprints show that the impostor matching difference ${\Delta_{\text{impo}}}$ satisfies the constant-expectation and stationary-variance condition, whereas the genuine matching difference ${\Delta_{\text{genu}}}$ does not.

\textbf{Constant expectation of ${\Delta_{\text{impo}}}$}. 
We first look at the expectation of the impostor matching difference ${\Delta_{\text{impo}}}$. The expectation of ${\Delta_{\text{impo}}}$ is estimated by averaging the results of ${10^4}$ independent experiments. Figure~\ref{fig:impo_mean} shows the averaging result of ${\Delta_{\text{impo}}}$. It can be seen that the expectation of ${\Delta_{\text{genu}}}$ is almost constant. Figure~\ref{fig:histogram_impo_mean} displays the histogram of the average ${\Delta_{\text{impo}}}$. It is clear that the average of ${\Delta_{\text{impo}}[i,j]}$ concentrates around ${1.5}$. 

We wish to model ${\Delta_{\text{impo}}[i,j]}$ by the following distribution. According to Eq.~\eqref{eq:6}, ${\Delta_{\text{impo}}[i,j] \in \{0,1,2,3\}}$ is a function defined on ${\Omega \times \Omega} = \{(\omega_{1},\omega_{2}) \colon \omega_{1}, \omega_{2} \in \Omega \}$, where ${\Omega = \{0, 1, 2, 3, 4, 5\}}$ contains each possible orientation of palm lines. Since the palmprints of difference person varies a lot, it may be safe to say that each orientation appears with equi-probability. Based on this equi-probability assumption, we can immediately give an probability distribution function of ${\Delta_{\text{impo}}[i,j]}$, which is provided in Table~\ref{table:1}. The empirical distribution out of ${10^4}$ experiments is that ${P_{e}(0) = 0.1607}$, ${P_{e}(1) = 0.3394}$, ${P_{e}(2) = 0.3462}$, and ${P_{e}(3) = 0.1537}$, which are very close to the theoretical results given in Table~\ref{table:1}. According to this model, the expectation of ${\Delta_{\text{impo}}[i,j]}$ is 1.5, consisting with the result in Figure~\ref{fig:histogram_impo_mean}.

\begin{table}[h]
	\centering
	\caption{Distribution function of the discrete random variable ${\Delta_{\text{impo}}[i,j]}$.}
	
	\begin{tabular}{p{1.8cm}<\centering p{1.1cm}<\centering p{1.1cm}<\centering p{1.1cm}<\centering p{1.1cm}<\centering p{1.1cm}<\centering}
		\toprule
		${\Delta_{\text{impo}}[i,j]}$ &  0   &  1  &  2  &  3  \\ 
		\midrule
		${P}$                         & 1/6  & 1/3 & 1/3 & 1/6 \\	
		\bottomrule
	\end{tabular}
	
	\label{table:1}
\end{table}

\textbf{Stationary variance of $\Delta_{\text{impo}}$}. We now investigate the variance of the impostor matching difference ${\Delta_{\text{impo}}}$. Our aim is to show that the variance is stationary. One way to test the stationary of the variance is observing the co-variance of ${\text{Vec}(\Delta_{\text{impo}})}$, where ${\text{Vec}(\cdot)}$ vectorizes the 2-D ${\Delta_{\text{impo}}}$. According to our analysis in Proposition~\ref{Theorem:1}, if the variance of ${\Delta_{\text{impo}}}$ is stationary, the co-variance matrix ${\text{Var}(\text{Vec}(\Delta_{\text{impo}}))}$ is a block circulant matrix except the boundary areas. The empirical co-variance matrix out of ${10^4}$ independent experiments is shown in Figure~\ref{fig:impo_var}. It can be seen that the co-variance matrix can be approximated by a block-circulant matrix, which mean that the variance of ${\Delta_{\text{impo}}}$ is stationary. 

Another way to test the stationary of ${\Delta_{\text{impo}}}$ is by investigating the distribution of the impostor matching distance defined by
\begin{equation}
d_{\text{impo}} = \frac{\sum_{i=0}^{N-1}\sum_{j=0}^{N-1} \Delta_{\text{impo}}[i,j]}{N^2}.	
\end{equation}
Making use of the central limit theorem, if each ${\Delta_{\text{impo}}[i,j]}$ is identically distributed and  ${\Delta_{\text{impo}}}$ is stationary, the average of the variables in ${\Delta_{\text{impo}}}$ could be approximated by a normal distribution \cite{feller2008introduction}. Figure~\ref{fig:distribution_impo} displays the empirical distribution of the impostor matching distance out of ${4.9 \times 10^5}$ independent experiments. The black curve is the fitted Gaussian curve. It can be seen that the distribution can be fitted by a Gaussian curve very well. The mean of the empirical distribution is 1.4914, closely approaches the theoretical value given by ${\text{E}(\Delta_{\text{impo}}[i,j]) = 1.5}$. 

\begin{figure}[t]
	\centering
	\includegraphics[width=0.92\linewidth]{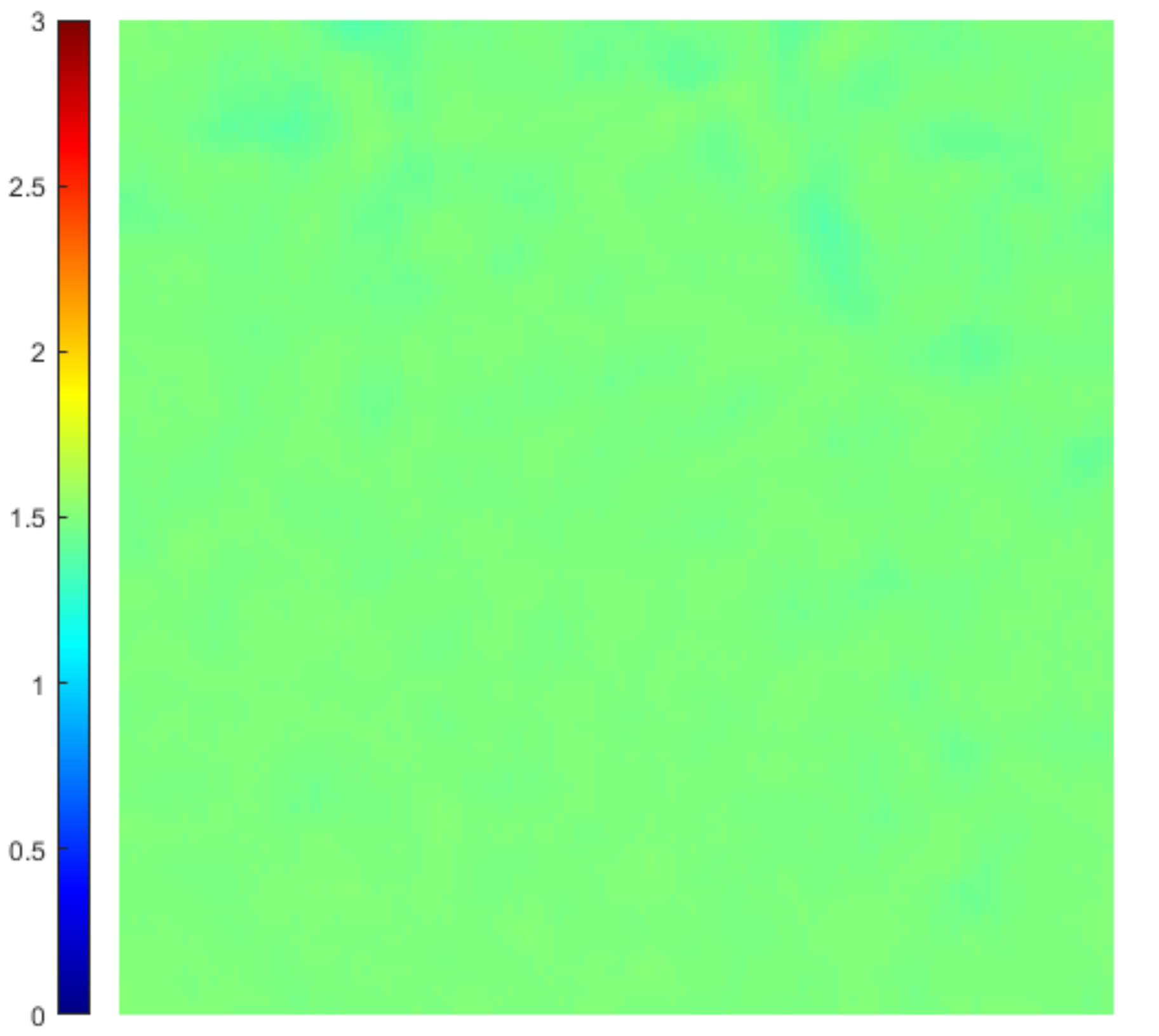}
	\caption{Average result of the impostor matching difference ${\Delta_{\text{impo}}}$ across ${10^4}$ independent experiments.}
	\label{fig:impo_mean}
\end{figure}

\begin{figure}[]
	\centering
	\includegraphics[width=0.94\linewidth]{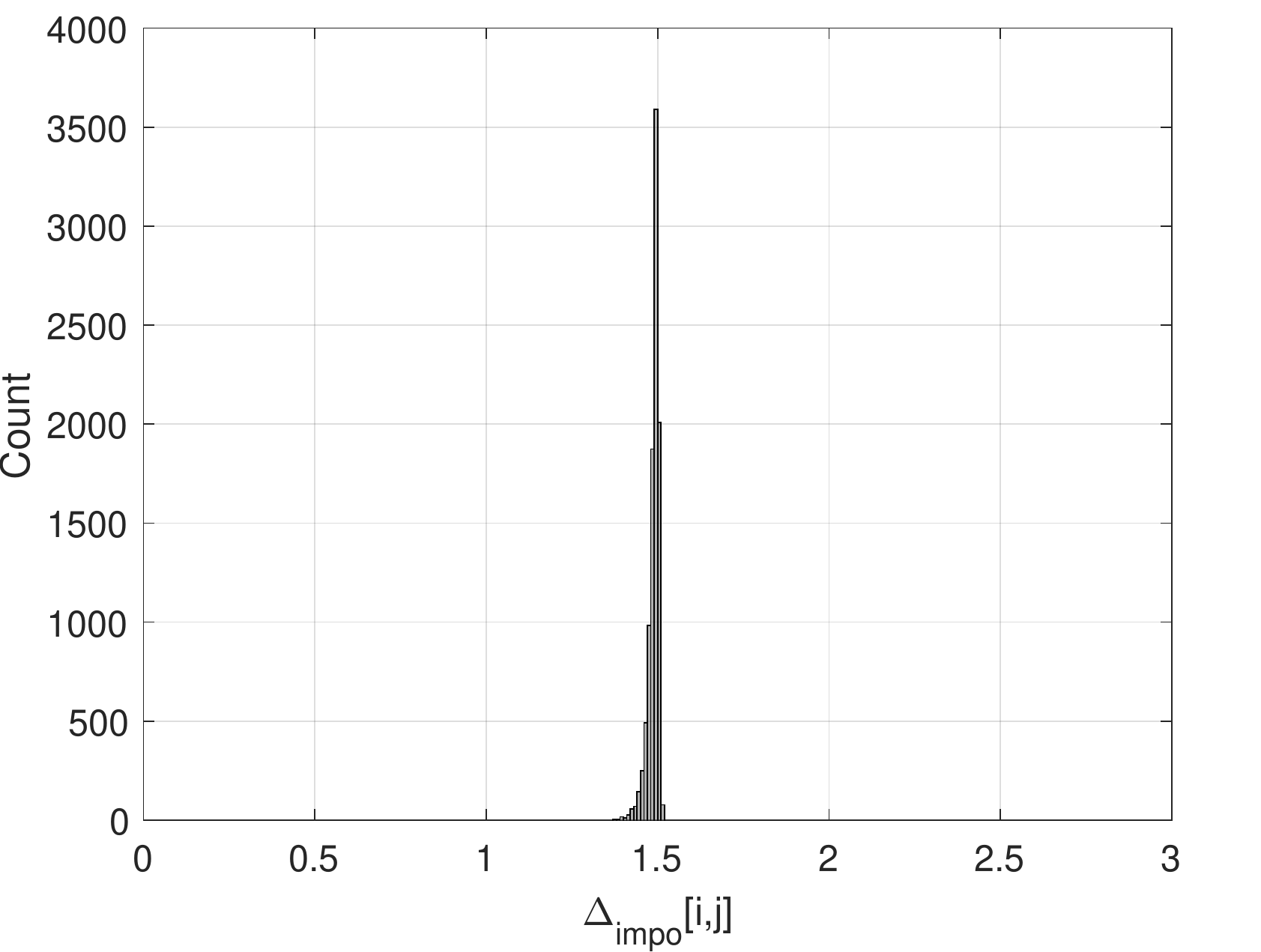}
	\caption{Histogram of the average ${\Delta_{\text{impo}}}$ shown in Figure~\ref{fig:impo_mean}. It is clear that the average of ${\Delta_{\text{impo}}[i,j]}$ concentrates around ${1.5}$.}
	\label{fig:histogram_impo_mean}
\end{figure}

\textbf{Non-constant expectation of ${\Delta_{\text{genu}}}$}. It's time to study the expectation of the genuine matching difference ${\Delta_{\text{genu}}}$. However, the TONGJI database \cite{zhang2017toward} we used in this paper only provides 20 palmprint images for each identity. Therefore, a maximum of ${C_{20}^{2} = 190}$ experiments can be conducted to estimate the expectation, which is far from sufficient. But we wish our experiments could at least make the trend clear. 

\begin{figure*}[t]
	\centering
	\includegraphics[width=0.96\linewidth]{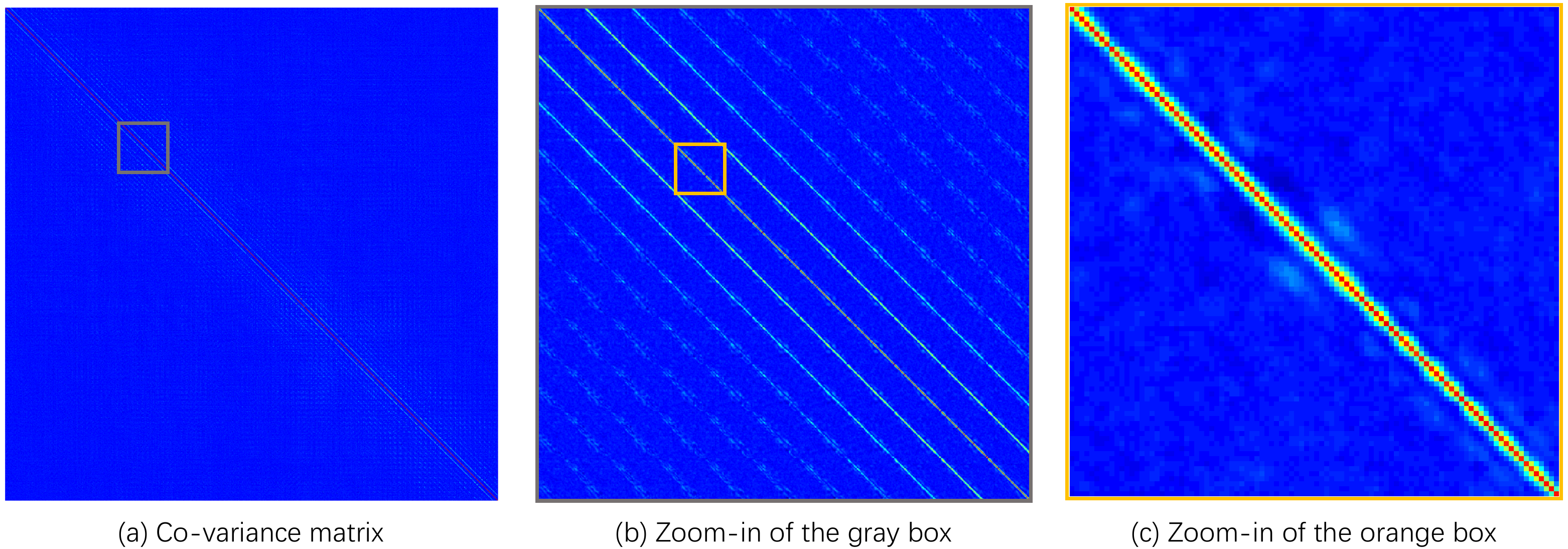}
	\caption{The empirical co-variance matrix of ${\text{Vec}(\Delta_{\text{impo}})}$ out of ${10^4}$ independent experiments. It can be seen that the co-variance matrix can be approximated by a block-circulant matrix, which mean that the variance of ${\Delta_{\text{impo}}}$ is stationary.}
	\label{fig:impo_var}
\end{figure*}

\begin{figure}[htb]
	\centering
	\includegraphics[width=1.0\linewidth]{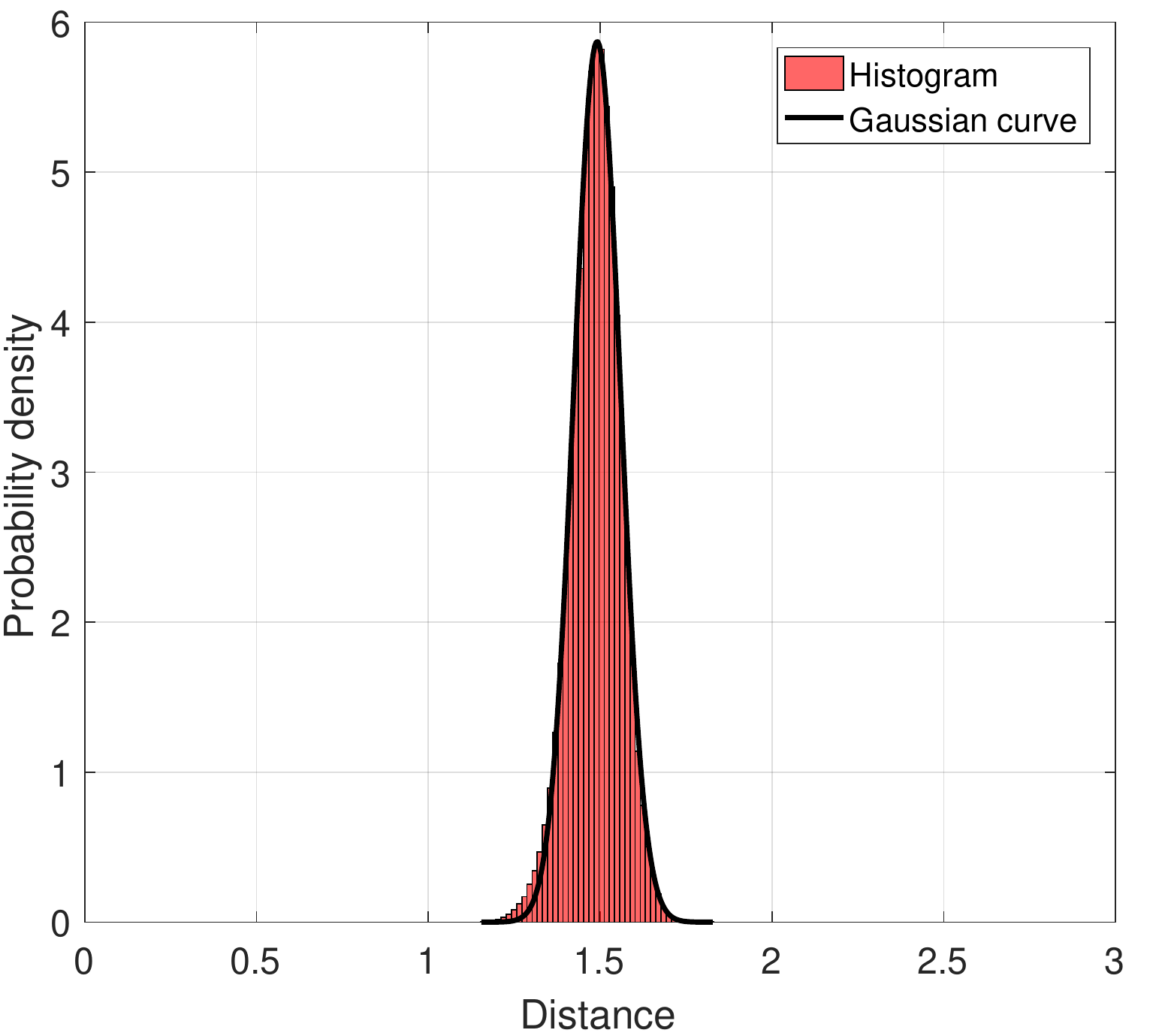}
	
	\caption{Histogram of the impostor matching distance out of ${4.9 \times 10^5}$ independent experiments. Black solid curve is the fitted Gaussian curve.}
	\label{fig:distribution_impo}
\end{figure}

Figure~\ref{fig:genu_mean} depicts the average result of ${\Delta_{\text{genu}}}$ out of 190 experiments. It can be seen that the expectation of ${\Delta_{\text{genu}}}$ is by no means constant. We find that ${\Delta_{\text{genu}}}$ can be roughly divided into two parts -- one part corresponds to the palm-line areas that contain significant orientation features, the other part corresponds to the flat areas between two palm lines. These non-palm-line areas contain no significant orientation, thus the features extracted from these areas are sensitive to noise and can be modeled as random. According to our experiments shown in Figure~\ref{fig:genu_mean}, the expectation of ${\Delta_{\text{genu}}}$ in palm-line areas is close to 0 (ideally, it should be 0), whereas the expectation of ${\Delta_{\text{genu}}}$ in non-palm-line areas is around 1.5, which equates the random results in ${\Delta_{\text{impo}}}$. This dichotomy structure may suggest that treating all the elements in ${\Delta_{\text{genu}}}$ in the same way, just as CompCode does, is not a good policy. 

\begin{figure}[htb]
	\centering
	\includegraphics[width=0.97\linewidth]{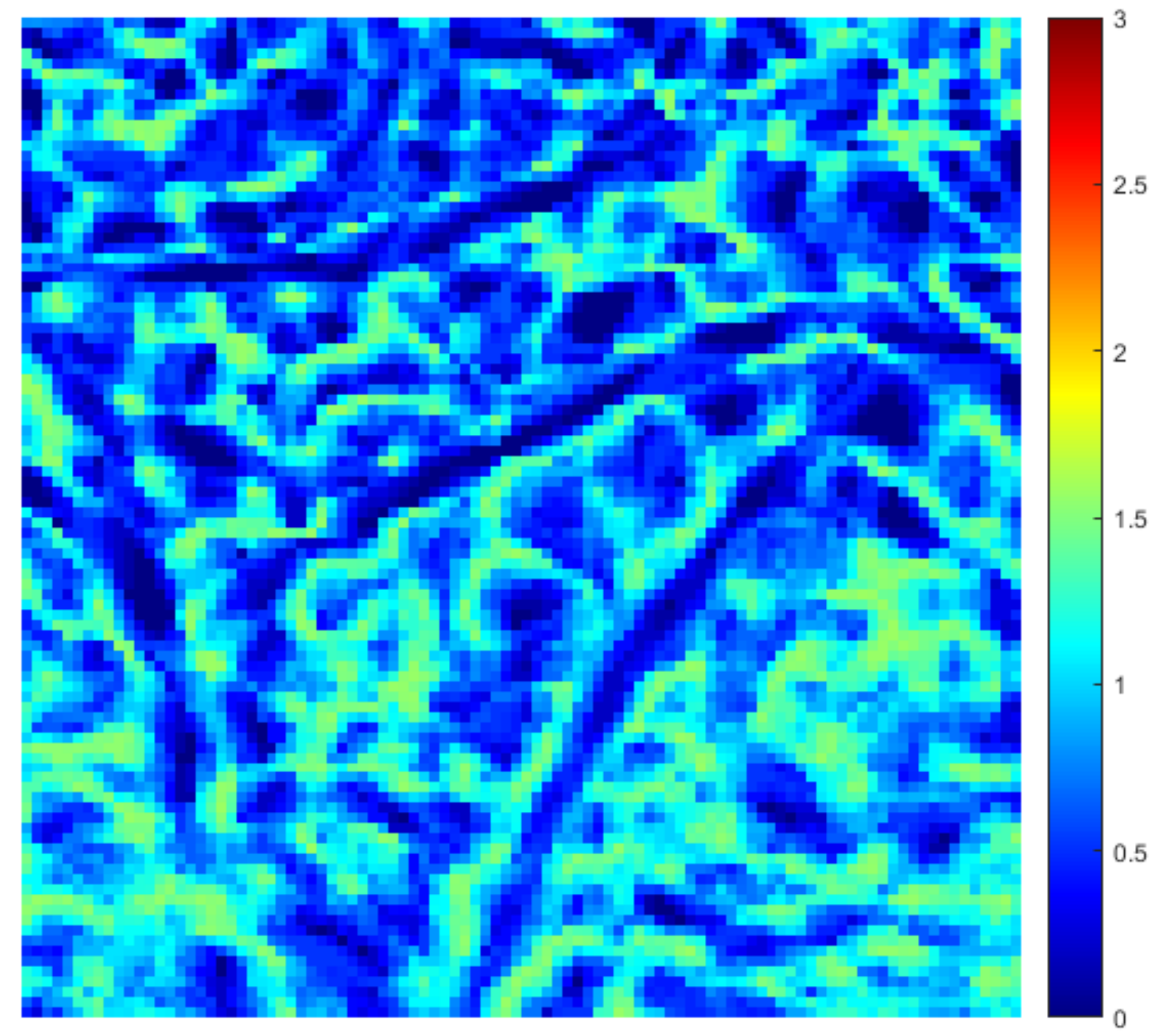}
	\caption{Average result of the genuine matching difference ${\Delta_{\text{genu}}}$ across 190 experiments. The average varies with position obviously.}
	\label{fig:genu_mean}
\end{figure}

\textbf{Non-stationary variance of ${\Delta_{\text{genu}}}$}. 
Finally, we will investigate the variance of ${\Delta_{\text{genu}}}$. Figure~\ref{fig:genu_var} displays the average result of ${\text{Var}[\text{Vec}(\Delta_{\text{genu}})]}$ out of 190 experiments. Although the estimation made by 190 experiments can hardly be precise, it could hopefully make the trend clear. Like the expectation of ${\Delta_{\text{genu}}}$, the co-variance of ${\text{Vec}(\Delta_{\text{genu}})}$ can also be roughly divided into two parts -- one part is the rows and columns that corresponds to the palm-line areas (the ``dark lines" in Figure~\ref{fig:genu_var}), the other part is the rows and columns that correspond to non-palm-line areas. We find that the rows and columns corresponding to palm-line areas have relatively low values that are close to 0 (ideally they should be 0), while the rows and columns corresponding to non-palm-line areas have relatively high values. This is to say the palm-line areas of ${\Delta_{\text{genu}}}$ have low variance, whereas the non-palm-line areas of ${\Delta_{\text{genu}}}$ have high variance. The non-stationary of ${\Delta_{\text{genu}}}$ also suggests the classifier with a weight vector ${[1,1,\cdots, 1]^{\intercal}_{N \times N}}$ is not optimal. 

Our investigation in the statistics of palmprints indicate that the classifier currently used in CompCode is not optimal. To show this, we need to test whether Eq.~\eqref{eq:10} holds when ${\mathbf{w} = [1,1,\cdots,1]^{\intercal}_{N \times N}}$, or whether the equivalence of Eq.~\eqref{eq:10},
\begin{equation} \label{eq:22}
(\Sigma_{\text{genu}} + \Sigma_{\text{impo}})\cdot \mathbf{w} \propto (\mathbf{\mu}_{\text{genu}} - \mathbf{\mu}_{\text{impo}}),
\end{equation}
holds when ${\mathbf{w} = [1,1,\cdots,1]^{\intercal}_{N \times N}}$ if the co-variance matrix ${\Sigma_{\text{genu}}}$ and ${\Sigma_{\text{impo}}}$ is singular. According to our investigations of statistics of palmprints, ${(\mathbf{\mu}_{\text{genu}} - \mathbf{\mu}_{\text{impo}}) \propto [1,1, 0, 1,0, \cdots, 0, 1]^{\intercal}_{N \times N}}$ where $1$ appears at the indexes corresponding to palm-line areas and $0$ appears at the indexes corresponding to non-palm-line areas. Let ${\mathbf{w} = [1,1,\cdots,1]^{\intercal}_{N \times N}}$, then we get ${\Sigma_{\text{impo}}\cdot \mathbf{w} \propto [1,1,\cdots, 1]^{\intercal}_{N \times N}}$ because ${\Sigma_{\text{impo}}}$ is stationary. To make Eq.~\eqref{eq:22} hold,  ${\Sigma_{\text{genu}} \cdot \mathbf{w}}$ must be proportional to a vector whose values at indexes that correspond to non-palm-line areas are ${-1}$. This is impossible, or at least can not be guaranteed, because most of the elements are positive in ${\Delta_{\text{genu}}}$. Therefore Eq.~\eqref{eq:22} can not hold when ${\mathbf{w} = [1,1,\cdots,1]^{\intercal}_{N \times N}}$. 

\begin{figure}
	\centering
	\includegraphics[width=0.98\linewidth]{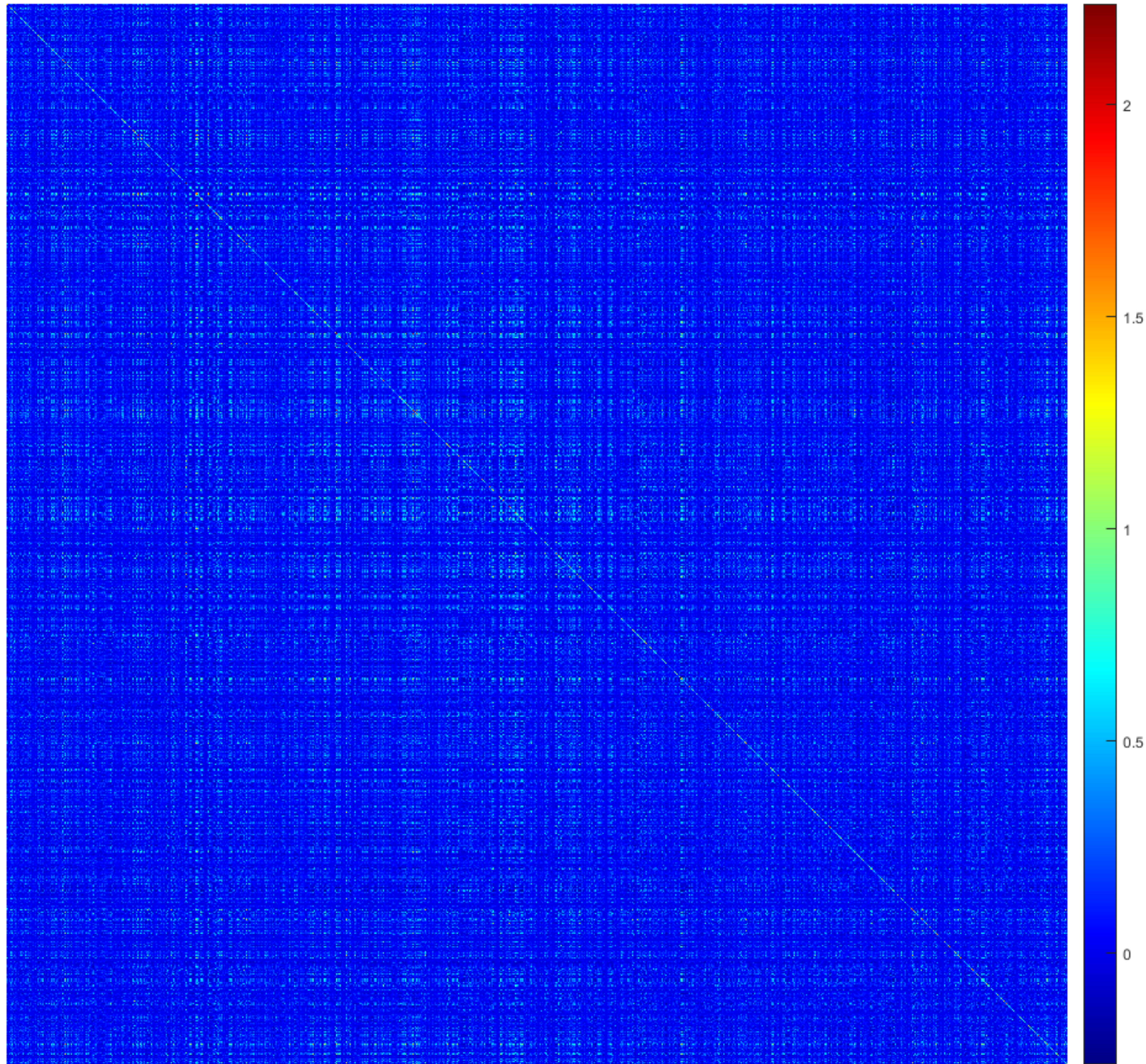}
	\caption{Average result of the co-variance matrix ${\text{Var}[\text{Vec}(\Delta_{\text{genu}})]}$ across 190 experiments. The lines or columns that correspond to palm-line areas have obviously low values.}
	\label{fig:genu_var}
\end{figure}

In the next section, we will propose a new palmprint recognition method called ``Class-Specific CompCode" (CSCC), which improves CompCode by excluding non-palm-line areas from matching. By this modification, the Fisher's criterion is satisfied by the proposed method.


\section{Class-Specific CompCode (CSCC)} \label{sec:improvement}
In this section, we propose to improve CompCode in a way that the elements in the weight vector ${\mathbf{w}}$ corresponding to palm-line areas are set to $1$ whereas the elements corresponding to non-palm-line areas are set to $0$. This is to say we exclude non-palm-line areas from the matching. The improved CompCode is called Class-Specific CompCode (CSCC) because this method endows each class (identity) a specific classifier according to the profiles of the palm lines. 

Now, we will justify the modification by showing that this modification makes Eq.~\eqref{eq:22} holds. We have known that the right part of Eq.~\eqref{eq:22} is proportional to a vector ${[1,1,0,1,0, \cdots, 0,1]^{\intercal}_{N \times N}}$, where $1$ only appears at the indexes corresponding to palm-line areas. The rest thing is to prove that the left part of Eq.~\eqref{eq:22} is proportional to this vector. 

Let's first look at ${\Sigma_{\text{genu}}\cdot \mathbf{w}}$. As previously mentioned, the rows and columns of ${\Sigma_{\text{genu}}}$ corresponding to palm-line areas have low values that are close to 0 (ideally they should be 0), while the rows and columns corresponding to non-palm-line areas have relatively high values. Since we set ${\mathbf{w}[i]=1}$ if ${i}$ corresponds to palm-line areas and ${\mathbf{w}[i] = 0}$ if ${i}$ corresponds to non-palm-line areas, then ${\mathbf{w}}$ is orthogonal with each row of ${\Sigma_{\text{genu}}}$. Therefore ${\Sigma_{\text{genu}} \cdot \mathbf{w} = 0}$.

\begin{figure}[t]
	\centering
	\includegraphics[width=0.9\linewidth]{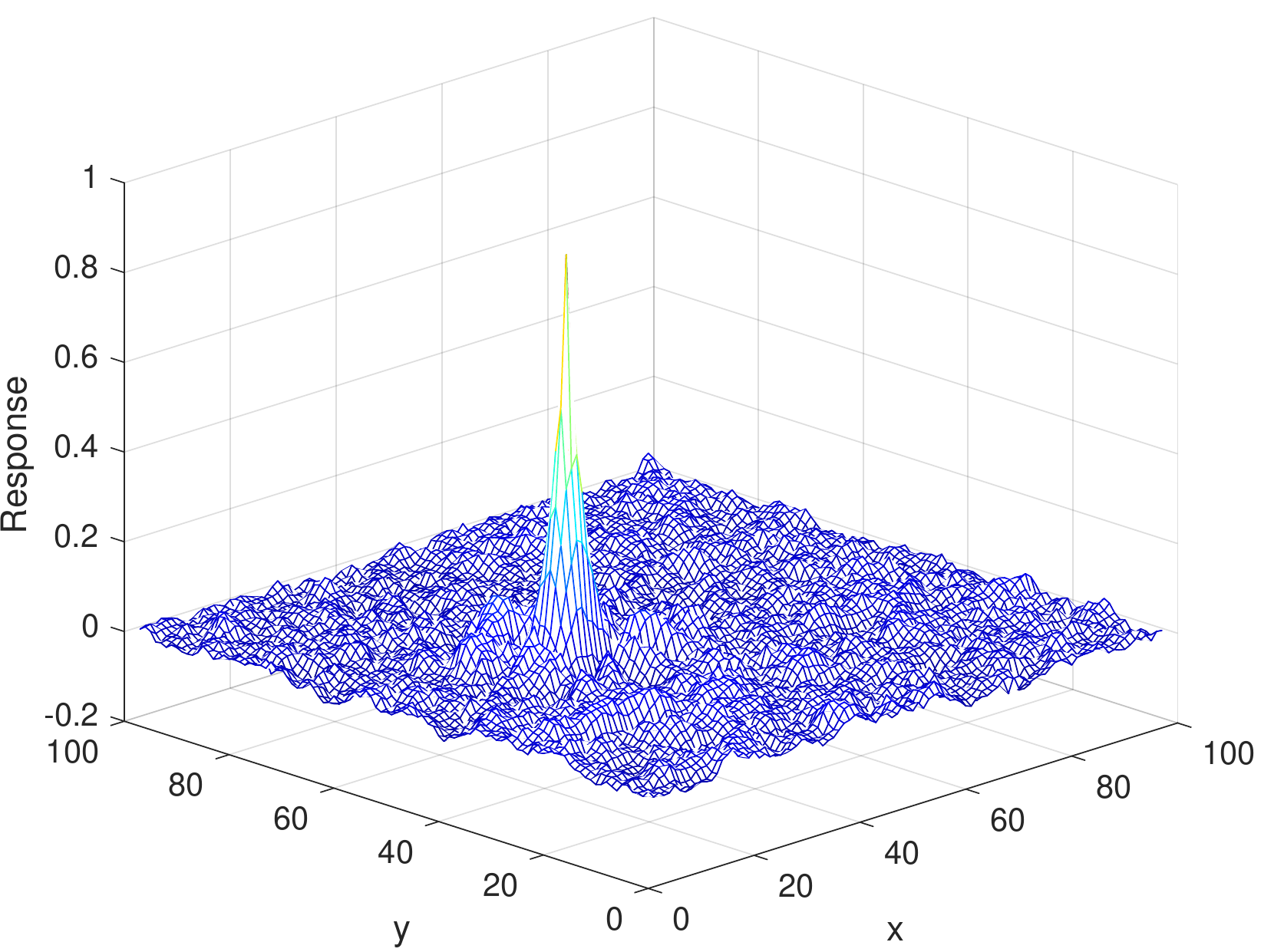}
	\caption{The block-circulant matrix ${\Delta_{\text{impo}}}$ is produced by this kernel.}
	\label{fig:kernel}
\end{figure}

Then, let's look at ${\Sigma_{\text{impo}} \cdot \mathbf{w}}$. From Eq.~\eqref{eq:15} and Eq.~\eqref{eq:16} we can see that ${\Sigma_{\text{impo}}}$ is produced by a kernel
\begin{equation} 
	K = \left [ 
			\begin{matrix}
					b^{0}_{0} & b^{0}_{1} & b^{0}_{2} & b^{0}_{3} & \cdots   & \cdots  & b^{0}_{N} ~\\
					b^{1}_{0} & b^{1}_{1} & b^{1}_{2} & b^{1}_{3} & \cdots   & \cdots  & b^{1}_{N} \\
					b^{3}_{0} & \cdots    & \cdots    & \cdots    & \cdots   & \cdots  & b^{3}_{N} \\
					          &           & \ddots    &           & \ddots   &         &           \\
							  &           & \ddots    &           & \ddots   &         &           \\
					b^{N}_{0} & b^{N}_{1} & \cdots    & \cdots    & \cdots   & b^{N}_{N-1}  & b^{N}_{N} 
			\end{matrix}
		\right] .
\end{equation}
By permuting the elements in a row of ${\Sigma_{\text{impo}}}$, we can get the 2-D kernel shown in Figure~\ref{fig:kernel}. It is shown that the kernel has a Laplacian profile. The support of the kernel is around ${5 \times 5}$. According to a little knowledge of linear algebra, we have
\begin{equation}
	\Sigma_{\text{impo}} \cdot \mathbf{w} = K \otimes W,
\end{equation}
where ${W}$ is the 2-D form of ${\mathbf{w}}$ (${\text{Vec}(W) = \mathbf{w}}$), and ${\otimes}$ represents 2-D convolution. This is to say ${\Sigma_{\text{impo}}}$ smooths ${\mathbf{w}}$ by a low pass filter. Since the support of ${K}$ is small, we have 
\begin{equation}
	\Sigma_{\text{impo}} \cdot \mathbf{w} = K \otimes W \approx W.
\end{equation}
By combining the above analysis, we obtain
\begin{equation}
	(\Sigma_{\text{genu}} + \Sigma_{\text{impo}}) \cdot \mathbf{w} \propto \mathbf{w}.
\end{equation}
According to the definition, ${\mathbf{w} \propto (\mathbf{\mu}_{\text{genu}} - \mathbf{\mu}_{\text{impo}})}$, thus
\begin{equation}
(\Sigma_{\text{genu}} + \Sigma_{\text{impo}}) \cdot \mathbf{w} \propto (\mathbf{\mu}_{\text{genu}} - \mathbf{\mu}_{\text{impo}}).
\end{equation}
This proves that the by modifying the weight vector, the proposed CSCC method meets Fisher's criterion. 

\begin{figure}
	\centering
	\subfigure[palmprint]{
		\includegraphics[width=0.14\textwidth]{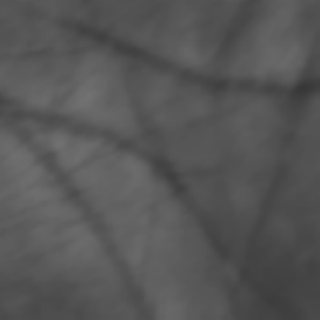}
	}
	\subfigure[Gabor response]{
		\includegraphics[width=0.14\textwidth]{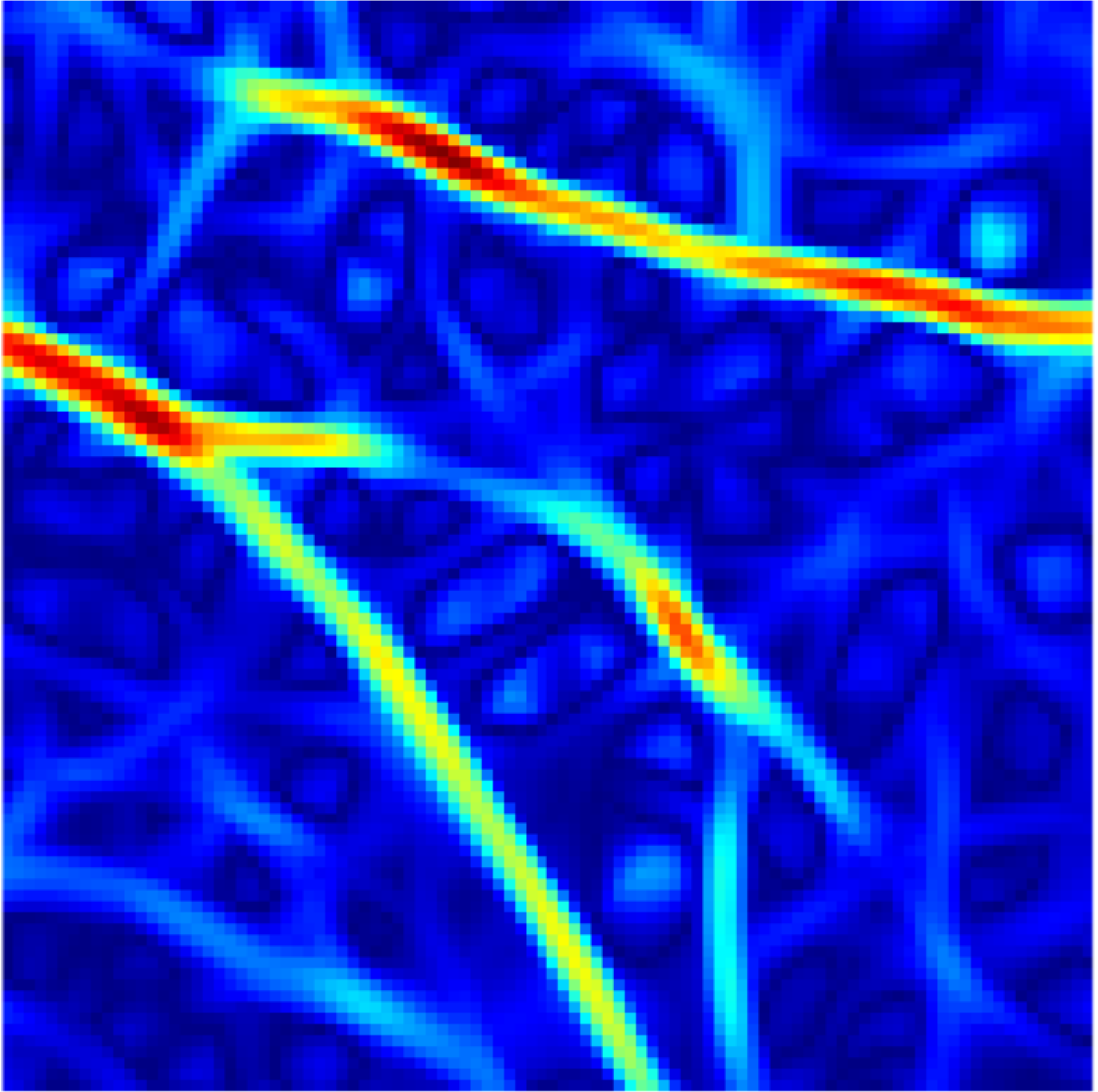}
	}
	\subfigure[palm lines]{
		\includegraphics[width=0.14\textwidth]{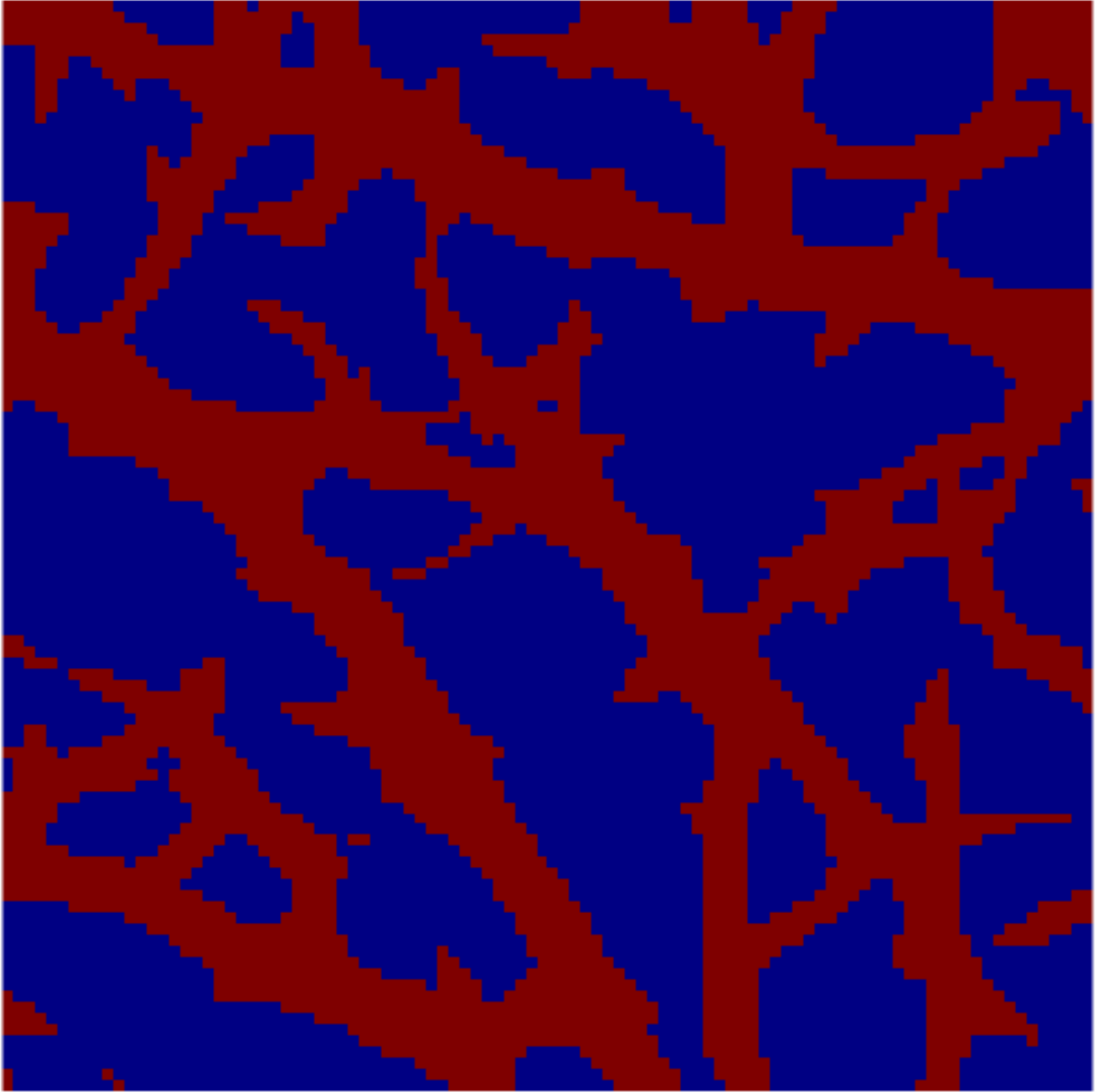}
	}
	\caption{Separate palm-line areas and non-palm-line areas by a threshold. (a) An example of a palmprint. (b) The filter response of six Gabor filters. (c) Binary mask of palm-line areas and non-palm-line areas.}
	
	\label{fig:palm_line}
\end{figure}
 
There remains a question of how to separate palm-line and non-palm-line areas. Based on our observation on Eq.~\eqref{eq:4}, the areas with big amplitude ${A}$ have strong filter responses, whereas flat areas will generate weak filter responses. So we can simply set a threshold to separate palm-line areas and non-palm-line areas. Figure~\ref{fig:palm_line} shows an example of separating palm-line areas and non-palm-line areas. The areas with strong filter responses are categorized as palm-line areas, whereas the areas with weak filter responses are categorized as non-palm-line areas. 

To further improve the performance, we propose to impose a non-linear mapping on the matching difference, i.e., 
\begin{equation}\label{eq:27}
	f(\Delta[i,j]) = \exp (-k\cdot \Delta[i,j]), \quad k > 0.
\end{equation}
Our motivation is to find a more discriminative coding scheme. For two-choice decision tasks, the decidability index ${d^{'}}$ measures how well separated the two distribution (e.g., genuine matching distance ${d_{\text{genu}}}$ and impostor matching distance ${d_{\text{impo}}}$). If their two means are ${\mu_1}$ and ${\mu_2}$, and their two standard deviations are ${\sigma_1}$ and ${\sigma_2}$, then ${d^{'}}$ is defined as \cite{daugman2003random}
\begin{equation}
d^{'} = \frac{|\mu_1 - \mu_2|}{\sqrt{(\sigma_1^{2}+\sigma_2^{2})/2}}.
\end{equation}
It is difficult to determine an optimal coding scheme because it requires the full knowledge about the distribution of ${\Delta_{\text{genu}}}$ and ${\Delta_{\text{impo}}}$. But it is easy to prove that under the ideal case, i.e., ${\Delta_{\text{genu}} = 0}$, the currently used coding scheme is not optimal at all. Actually, under the ideal case, the optimal coding scheme is 0-1 coding. The proof of this conclusion is presented in appendix~\ref{sec:appendix}. 

However, in practice, the genuine matching is not perfect -- a great amount of pixels are not exactly matched in the genuine matching due to various factors like misalignment, deformation, noise, etc. Directly applying 0-1 coding in practical palmprint recognition would significantly degrade the performance. There is a trade-off between decidability and robustness. In this paper, we use a negative exponent function (Eq.~\eqref{eq:27}) to make this trade-off because the negative exponent function approaches 0-1 as ${k}$ increases and tends to a linear function as ${k}$ decreases. 

In short, the proposed method CSCC improves CompCode in two folds. First, CSCC excludes non-palm-line areas from matching to meet Fisher's criterion. Second, by imposing a non-linear mapping on the competitive code, CSCC represents palmprints in a more discriminative way. In the next section, we will verify the effectiveness of the proposed method. Besides, we will also demonstrate that these two improvement strategies also benefit other coding-based methods.

\section{Experiments} \label{sec:experiments}
In this section, we will first verify the effectiveness of the proposed method CSCC. Then, we will extend our improvement strategies to other coding based methods. Experiments show that our improvement strategies also benefit these methods.

\subsection{Experimental Setup}
Our experiments are conducted on two public databases, i.e., TONGJI \cite{zhang2017toward} and IITD \cite{kumar2008incor}. TONGJI database collects palmprint images from 300 volunteers, including 192 males and 108 females. The palmprints are collected in two separate sessions. In each session, the subject was asked to provide 10 images for each palm. Therefore, the database totally contains 12000 images captured from 600 different palms. IITD database \cite{kumar2008incor} consists of 2300 palmprint images from 230 subjects. Each subject was asked to provide 5 images of each palm. The palmprint images are captured in a contactless way, which results in severe misalignment and deformation. 

In our experiments on TONGJI database, we use the whole database for genuine matching, which generates ${1.14 \times 10^5}$ comparisons, and 4000 images for impostor matching, which generates ${7.96 \times 10^6}$ comparisons. In our experiments on IITD database, we manually drop severely misaligned palmprint images and use 1875 images in the database, which generates ${3750}$ genuine comparisons and about ${1.753 \times 10^6}$ impostor comparisons. 
We empirically exclude ${30\%}$ areas of palmprints from matching and set the non-linear mapping factor ${k}$ as ${1.0}$. 
The receiver operating characteristic (ROC) curve, equal error rate (EER), and false reject rate (FRR) at a specific false accept rate (FAR) are used to evaluate the performance of the palmprint recognition methods. 


\subsection{Results}
Figure~\ref{fig:roc-compcode} shows the ROC curves of CompCode and the proposed Class-Specific CompCode (CSCC). One can see that for any given FAR, the genuine accept rate (GAR) of CSCC is always higher than that of CompCode. Table~\ref{table:2} lists the values of FRR and the values of EER of CompCode and the proposed CSCC. We use FRR${_{-6}}$ (FRR${_{-5}}$) to denote the false reject rate when FAR is ${10^{-6}}$ (${10^{-5}}$). Compared with CompCode, the proposed CSCC achieves lower EER value and lower FRR value. These results demonstrate the effectiveness of the proposed method CSCC. 

It is also important to analyze the computational complexity of the proposed CSCC. The proposed CSCC improves CompCode in two folds. The first is excluding non-palm-line areas from matching by setting a threshold, the second is imposing a non-linear mapping on the matching difference. Threshold operation is computationally efficient, the time consuming non-linear mapping can be efficiently implemented by table looking because the domain of this operation is just four discrete points (i.e., 0,1,2,3). Thus our improvements will not induce significant computational overhead. Table~\ref{table:3} compares the run time of CompCode and the proposed CSCC (both are Matlab code). It can be seen that our method is just 2{\,}ms slower than CompCode. 

\subsection{Ablation Study}
To see the contribution of each improvement strategy, we conduct an ablation study on TONGJI database. The results are shown in Table~\ref{table:4}. We use M-CC to denote the improved CompCode by excluding non-palm-line areas from matching and use E-CC to denote the improved CompCode by implementing a non-linear mapping. It can be seen that both of the two improvement strategies boost the performance. By combining these two strategies, the proposed CSCC achieves the best result.

\begin{figure}[t]
	\centering
	\subfigure{
		\includegraphics[width=0.96\linewidth]{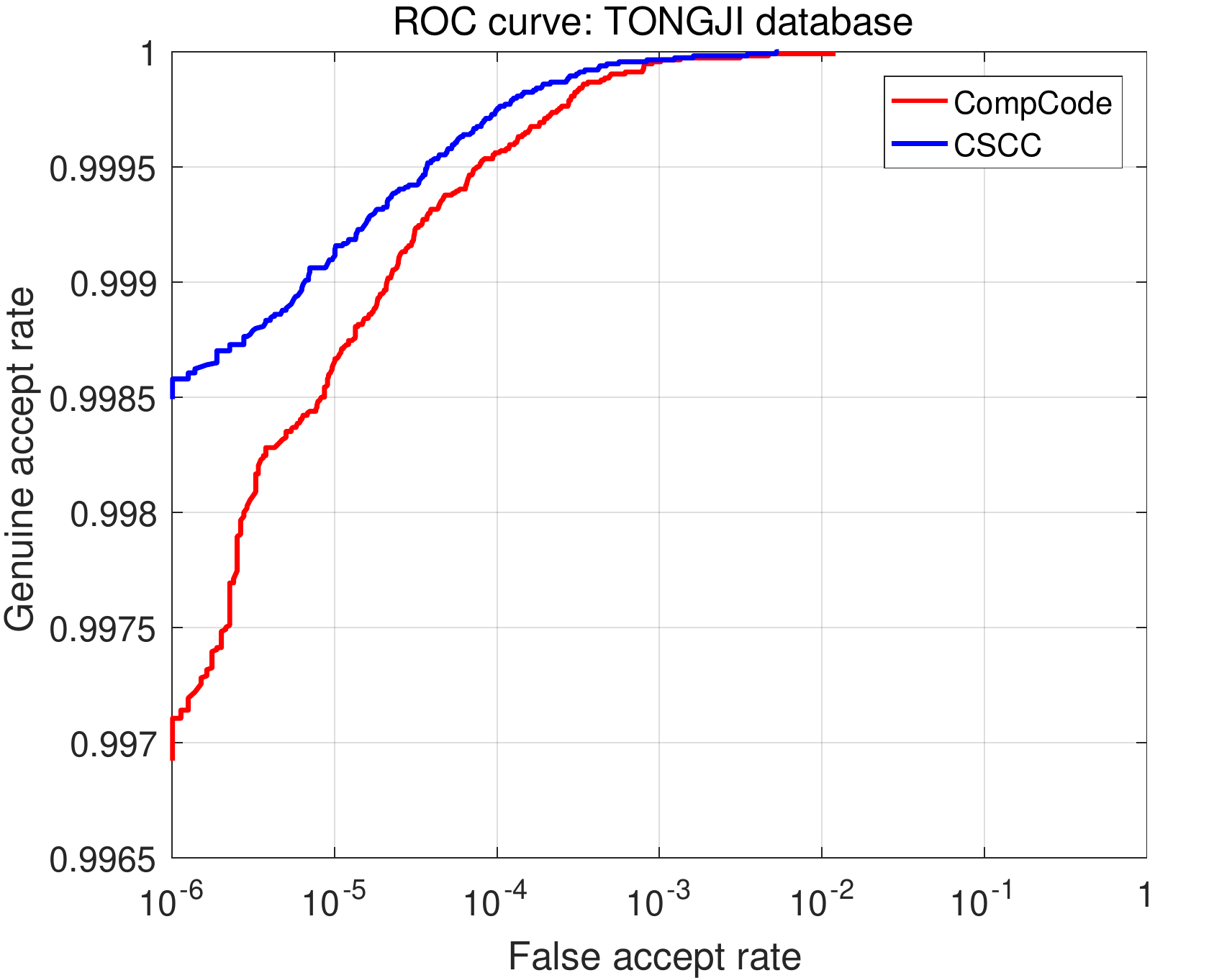}
	}\\
	\subfigure{
		\includegraphics[width=0.96\linewidth]{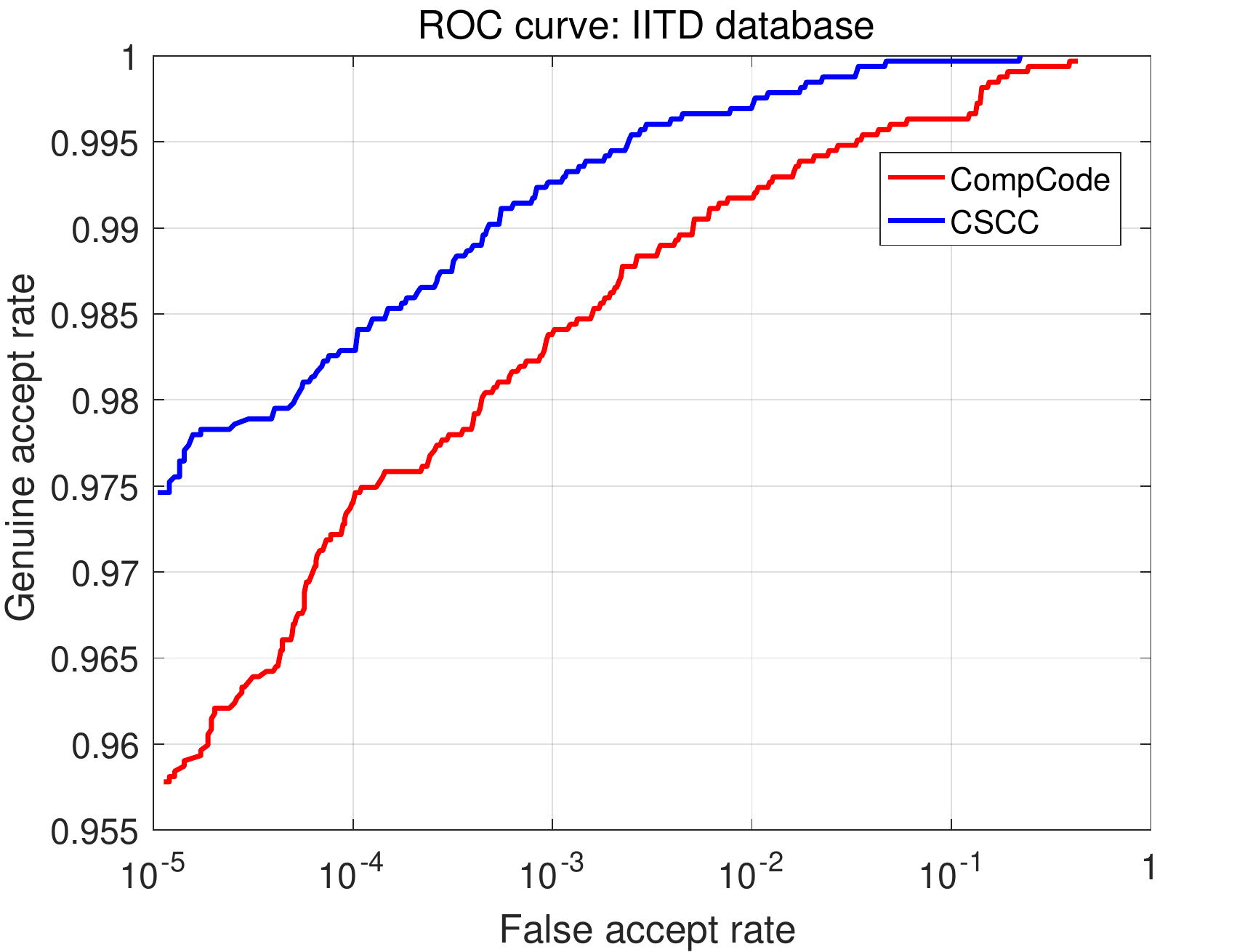}
	}
	
	\caption{ROC curves of CompCode and the proposed CSCC on TONGJI database and IITD database, respectively.}
	\label{fig:roc-compcode}
\end{figure}

\begin{table}[htbp]
	\centering
	\small
	\renewcommand{\arraystretch}{1.3}
	\caption{Comparison of error rates of CompCode and the proposed CSCC on TONGJI database and IITD database. We use FRR${_{-6}}$ (FRR${_{-5}}$) to denote the false reject rate when FAR is ${10^{-6}}$ (${10^{-5}}$).}
	\vspace{2pt}
	\begin{tabular}{| p{1.7cm}<\centering | p{1.21cm}<\centering | p{1.21cm}<\centering | p{1.21cm}<\centering | p{1.21cm}<\centering |}
		
		\hline
		\multirow{2}{*}{Method}	& \multicolumn{2}{c|}{TONGJI} & \multicolumn{2}{c|}{IITD}		\\
		\cline{2-5}
			                    & FRR${_{-6}}$  &  EER        &  FRR${_{-5}}$   &  EER          \\
		\hline				 
		CompCode                & 0.308\%       &  0.024\%    &  4.25\%         &  0.84\%       \\
		\hline
		\textbf{CSCC}           & \textbf{0.151\%} & \textbf{0.017\%} & \textbf{3.50\%} & \textbf{0.51\%}  \\
		\hline  			 
		
	\end{tabular}
	
	\label{table:2}
\end{table}

\begin{table}[htbp]
	\centering
	\small
	\renewcommand{\arraystretch}{1.3}
	\caption{Comparison of run time of CompCode and the proposed CSCC.}
	\vspace{2pt}
	\begin{tabular}{| p{2cm}<\centering | p{2cm}<\centering | p{2cm}<\centering |}
		
		\hline
		Method         &         CompCode       &         CSCC      		\\
		\hline
		Run time       &         18{\,}ms       &         20{\,}ms          \\
		\hline				  			 
		
	\end{tabular}
	
	\label{table:3}
\end{table}

\begin{table}[htbp]
	\small
	\centering
	\renewcommand{\arraystretch}{1.3}
	\caption{Ablation study of two improvement strategies. M-CC denotes excluding non-palm-line areas from matching, and E-CC denotes imposing a non-linear mapping on the matching difference.}
	\vspace{2pt}
	\begin{tabular}{| L{0.8cm} | p{1.5cm}<{\centering} | p{1.4cm}<{\centering} | p{1.4cm}<{\centering} | p{1.4cm}<{\centering}|}
		\hline
			  		   &  CompCode   &  M-CC       &  E-CC       & \textbf{CSCC} \\ 
		\hline
		 FRR${_{-6}}$  &  0.308\%    &  0.300\%    &  0.187\%    & \textbf{0.151\%} \\
		\hline
		 FRR${_{-5}}$  &  0.140\%    & 0.119\%     & 0.097\%     & \textbf{0.086\%} \\
		\hline
		 EER           &  0.024\%    &  0.018\%    &  0.018\%    & \textbf{0.017\%} \\
		\hline
	\end{tabular}

	\label{table:4}
\end{table}

\subsection{Experiments on other coding-based methods}
Previous analysis has uncovered two facts that will also benefit other coding-based palmprint recognition methods. One is that palm-line areas and non-palm-line areas should not be treated indiscriminately, the other is that the coding scheme should make a trade-off between decidability and robustness. In this subsection, we will apply our improvement strategies on several other coding-based methods, e.g., RLOC \cite{jia2008palmprint}, DOC \cite{fei2016double-orientation}, HOC \cite{fei2016half}, and DRCC \cite{xu2018discriminative}. 

Our experiments are conducted on TONGJI database \cite{zhang2017toward}. Figure~\ref{fig:roc} displays the ROC curves of four coding-based methods and their corresponding improved versions. We use the prefix ``CS-" to distinct the improved version from the basic version. From the results, it can be seen that almost for any given FAR, the genuine accept rate (GAR) of the improved version is always higher than that of the basic version. This shows that our improvement strategies benefit these coding-based methods. 

\begin{figure*}[htbp]
	\centering
	
	\subfigure{
		\includegraphics[width=0.48\linewidth]{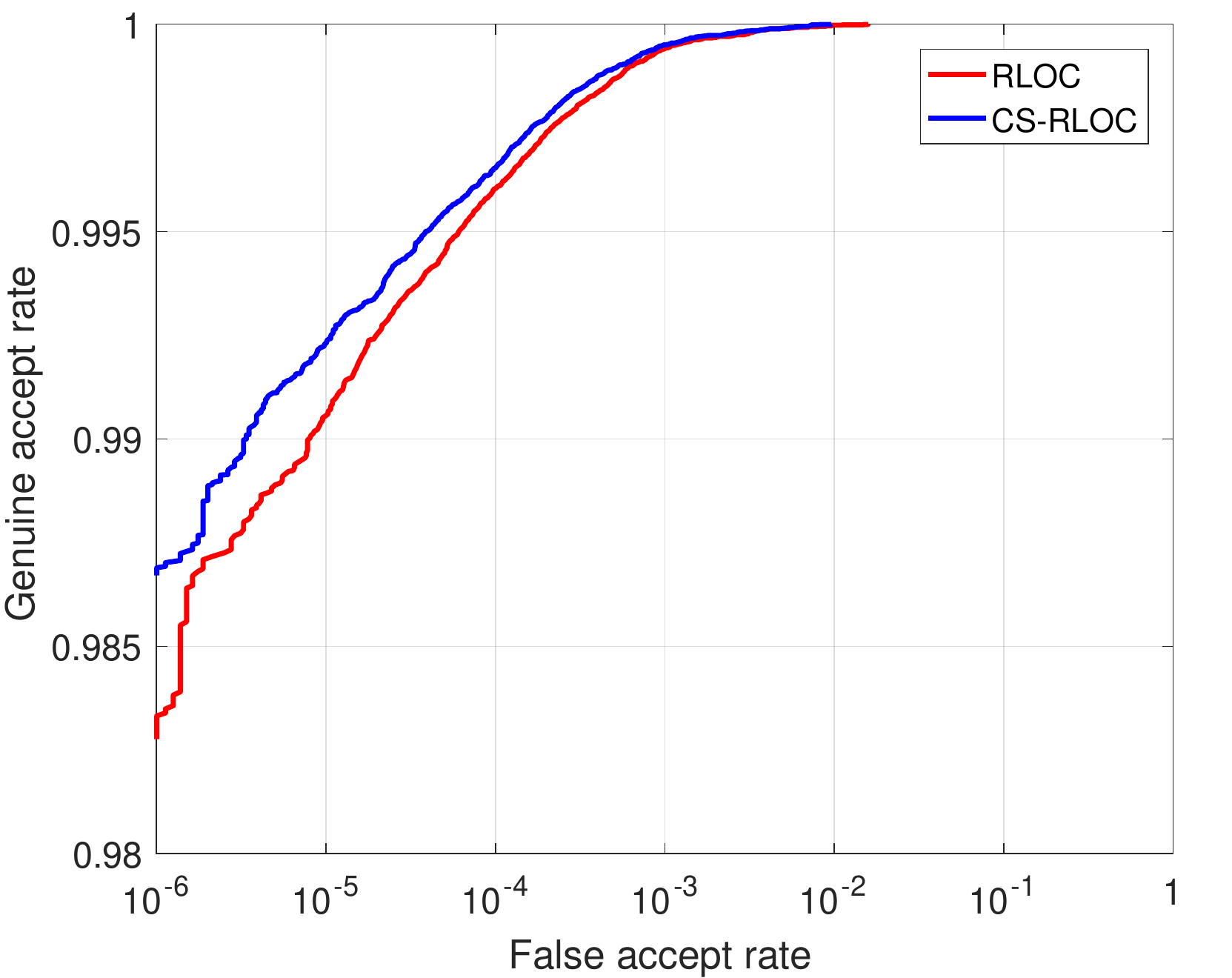}
	}
	\subfigure{
		\includegraphics[width=0.48\linewidth]{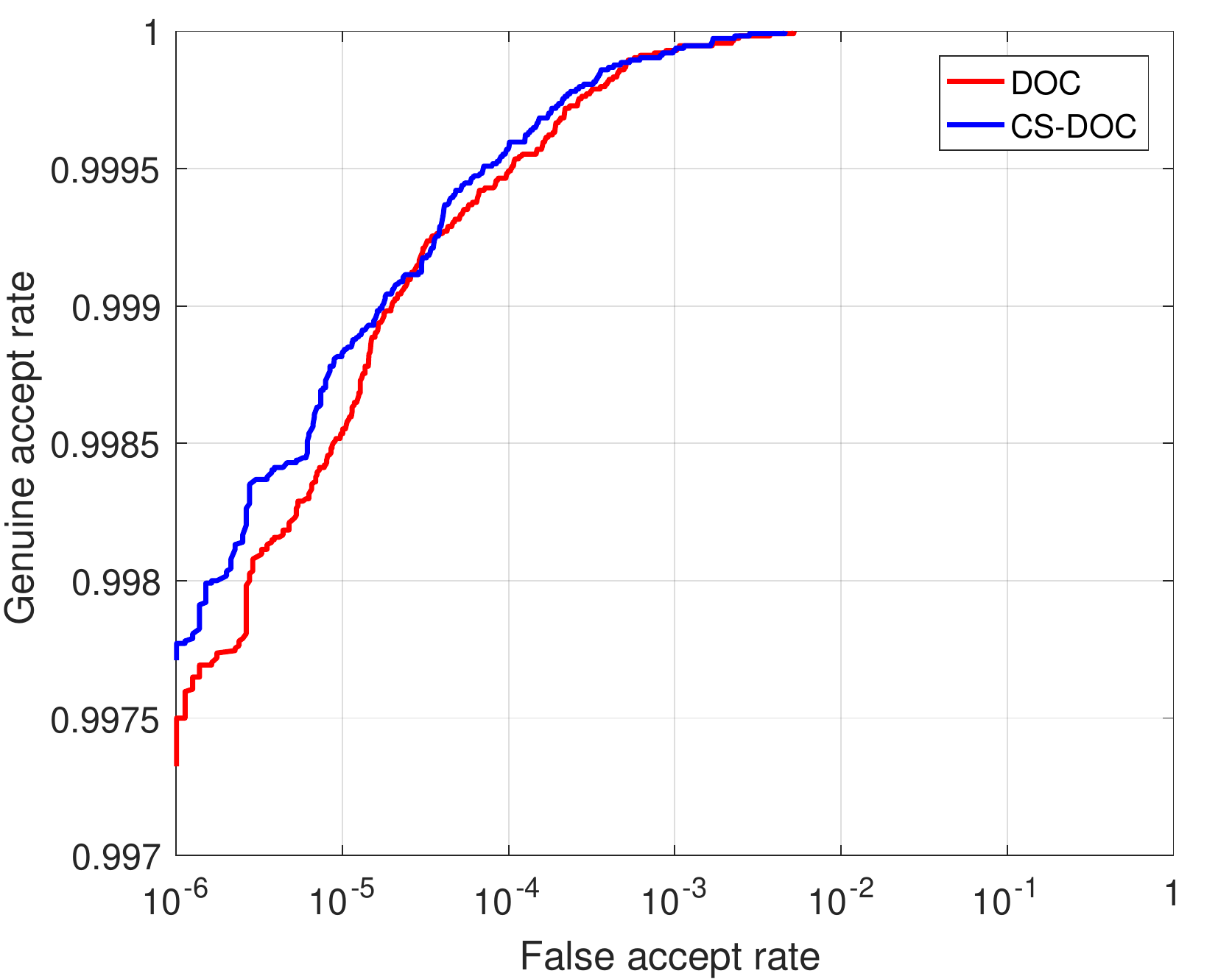}
	}

	\vspace{12pt}
	
	\subfigure{
		\includegraphics[width=0.48\linewidth]{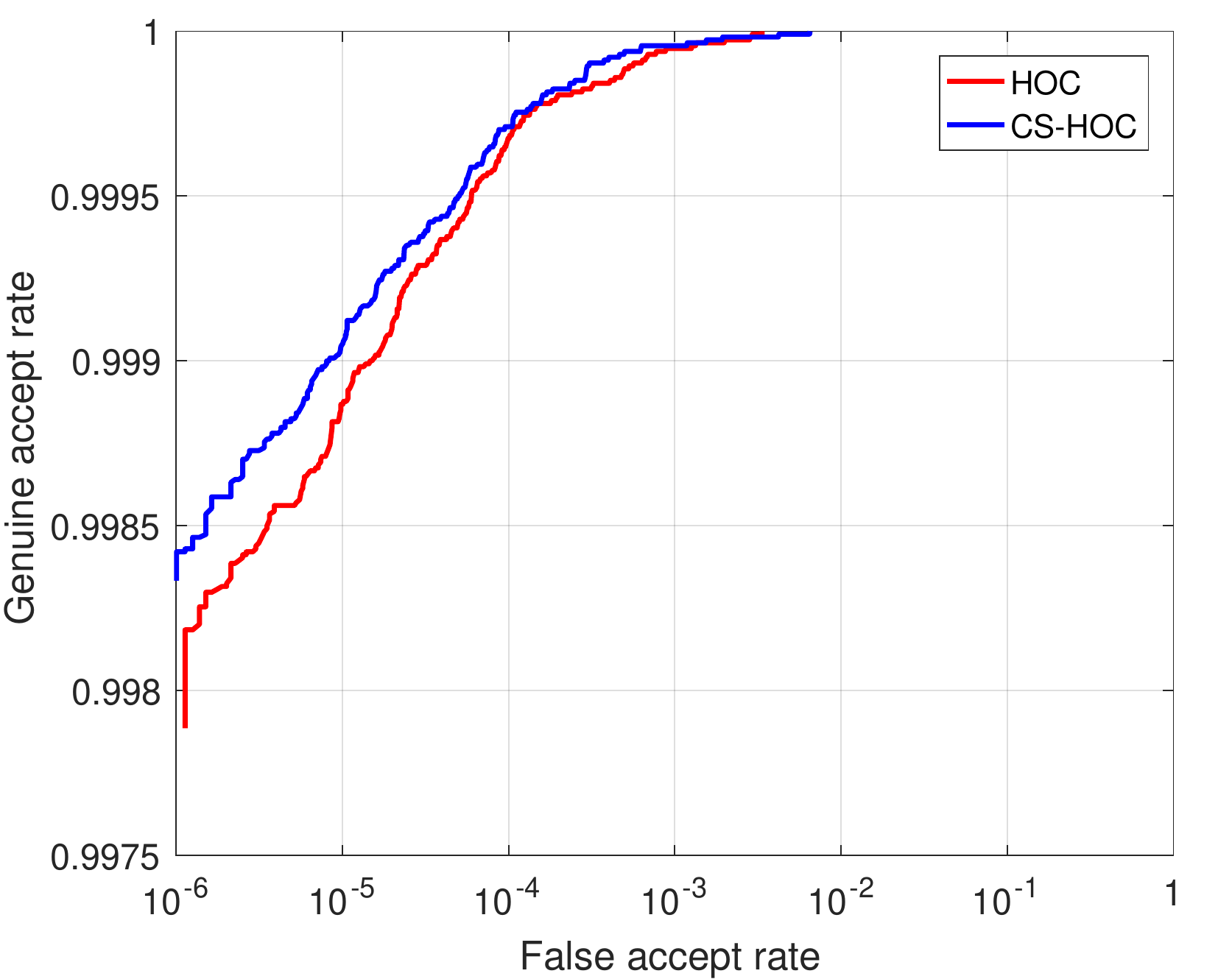}
	}
	\subfigure{
		\includegraphics[width=0.48\linewidth]{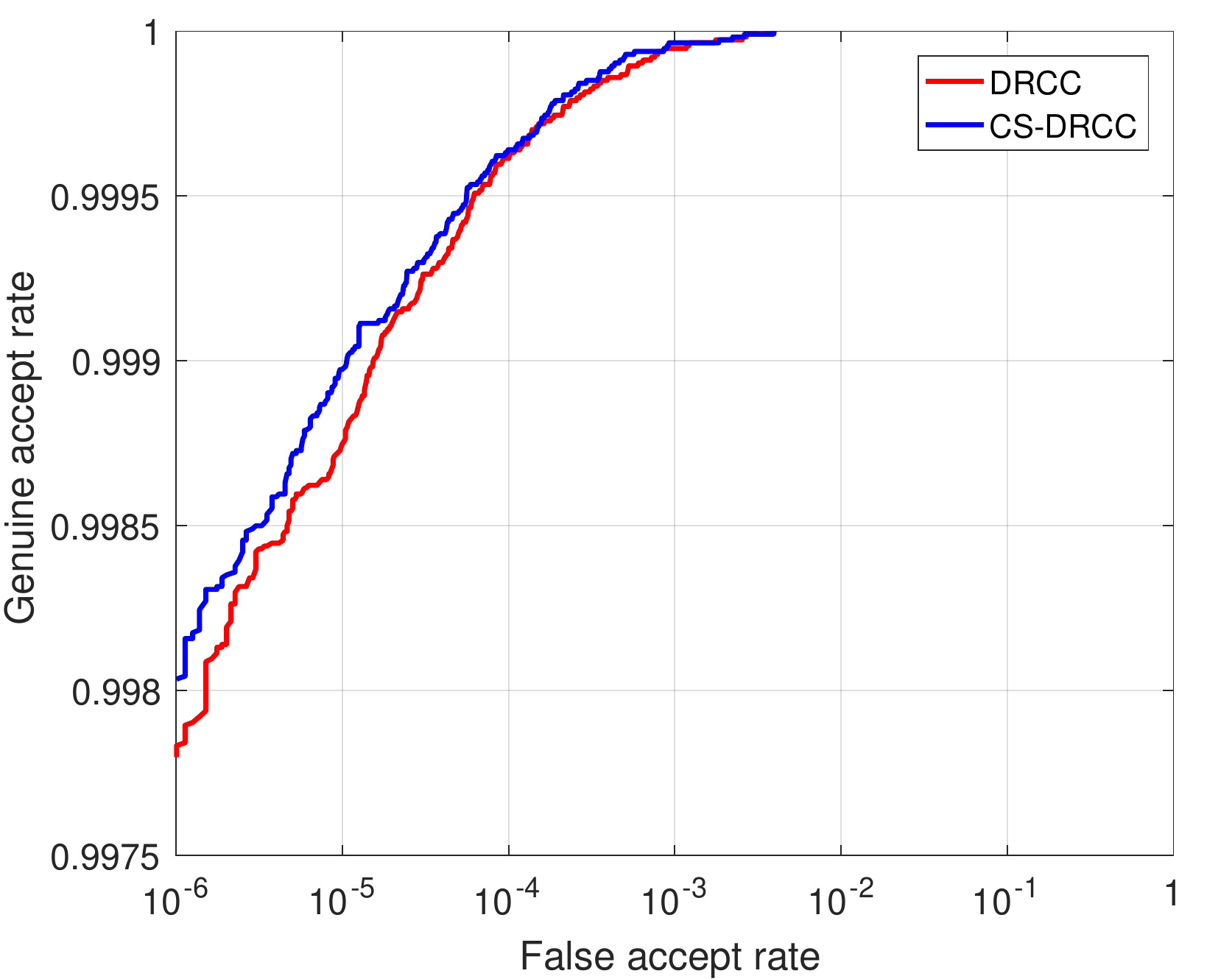}
	}

	\vspace{4pt}
	\caption{ROC curves of four coding-based methods and their corresponding improved versions. For example, RLOC is a coding-based method and CS-RLOC is the improved version by our improvement strategies. One can see that almost for any given FAR, the genuine accept rate of the improved versions are always higher than those of the basic versions.}
	
	\label{fig:roc}
\end{figure*}

\section{Conclusion}\label{sec:conclusion}
In this paper, we provide a detailed analysis of CompCode from the perspective of Linear Discriminant Analysis (LDA). Our analysis shows that the classifier used in CompCode is not optimal in the sense of Fisher's criterion. To improve CompCode, we propose to excluding non-palm-line areas from matching. The justification of this modification is that in this way the improved CompCode (CSCC) satisfies Fisher's criterion. In addition, a non-linear mapping is used to further enhance accuracy. Our experiments demonstrate the effectiveness of the proposed method CSCC and the benefits of these improvement strategies for other coding-based methods. 

We find that coding-based methods are very sensitive to misalignment between two palmprints due to its pixel-to-pixel matching. Currently, all these methods compensate for the misalignment by translating one of the palmprints horizontally and vertically at the matching stage, which can speak of a lack of alternatives. 
Therefore, in the future, we plan to investigate the registration between two palmprints. Besides, the methods that deal with palmprint deformation are also of great importance.



\bibliographystyle{elsarticle-num} 
\bibliography{reference}

\begin{thebibliography}{10}
\expandafter\ifx\csname url\endcsname\relax
  \def\url#1{\texttt{#1}}\fi
\expandafter\ifx\csname urlprefix\endcsname\relax\def\urlprefix{URL }\fi
\expandafter\ifx\csname href\endcsname\relax
  \def\href#1#2{#2} \def\path#1{#1}\fi

\bibitem{jain2004introduction}
A.~K. Jain, A.~Ross, S.~Prabhakar, An introduction to biometric recognition,
  IEEE Transactions on Circuits and Systems for Video Technology 14~(1) (2004)
  4--20.

\bibitem{zhang2003online}
D.~Zhang, W.-K. Kong, J.~You, M.~Wong, Online palmprint identification, IEEE
  Transactions on Pattern Analysis and Machine Intelligence 25~(9) (2003)
  1041--1050.

\bibitem{wu1997pyramid}
P.~S. Wu, M.~Li, Pyramid edge detection based on stack filter, Pattern
  Recognition Letters 18~(3) (1997) 239--248.

\bibitem{shu1998automated}
W.~Shu, D.~Zhang, Automated personal identification by palmprint, Optical
  Engineering 37~(8) (1998) 2359--2362.

\bibitem{zhang1999two}
D.~Zhang, W.~Shu, Two novel characteristics in palmprint verification: datum
  point invariance and line feature matching, Pattern Recognition 32~(4) (1999)
  691 -- 702.

\bibitem{shu2001automatic}
W.~Shu, G.~Rong, Z.~Bian, D.~Zhang, Automatic palmprint verification,
  International Journal of Image and Graphics 1~(01) (2001) 135--151.

\bibitem{duta2002matching}
N.~Duta, A.~K. Jain, K.~V. Mardia, Matching of palmprints, Pattern Recognition
  Letters 23~(4) (2002) 477 -- 485, in Memory of Professor E.S. Gelsema.

\bibitem{you2002hierarchical}
J.~You, W.~Li, D.~Zhang, Hierarchical palmprint identification via multiple
  feature extraction, Pattern Recognition 35~(4) (2002) 847 -- 859.

\bibitem{han2003personal}
C.-C. Han, H.-L. Cheng, C.-L. Lin, K.-C. Fan, Personal authentication using
  palm-print features, Pattern Recognition 36~(2) (2003) 371 -- 381.

\bibitem{daugman1993high}
J.~G. Daugman, High confidence visual recognition of persons by a test of
  statistical independence, IEEE Transactions on Pattern Analysis and Machine
  Intelligence 15~(11) (1993) 1148--1161.

\bibitem{daugman2003random}
J.~Daugman, The importance of being random: statistical principles of iris
  recognition, Pattern Recognition 36~(2) (2003) 279 -- 291, biometrics.

\bibitem{daugman2007new}
J.~Daugman, New methods in iris recognition, IEEE Transactions on Systems, Man,
  and Cybernetics, Part B (Cybernetics) 37~(5) (2007) 1167--1175.

\bibitem{daugman2009iris}
J.~Daugman, How iris recognition works, in: The essential guide to image
  processing, Elsevier, 2009, pp. 715--739.

\bibitem{adams2004competitive}
A.~W.~. Kong, D.~Zhang, Competitive coding scheme for palmprint verification,
  in: Proceedings of the 17th International Conference on Pattern Recognition
  (ICPR), Vol.~1, 2004, pp. 520--523.

\bibitem{jia2008palmprint}
W.~Jia, D.-S. Huang, D.~Zhang, Palmprint verification based on robust line
  orientation code, Pattern Recognition 41~(5) (2008) 1504 -- 1513.

\bibitem{kong2009a}
A.~Kong, D.~Zhang, M.~Kamel, A survey of palmprint recognition, Pattern
  Recognition 42~(7) (2009) 1408 -- 1418.

\bibitem{guo2009palmprint}
Z.~Guo, D.~Zhang, L.~Zhang, W.~Zuo, Palmprint verification using binary
  orientation co-occurrence vector, Pattern Recognition Letters 30~(13) (2009)
  1219 -- 1227.

\bibitem{zuo2010multiscale}
W.~Zuo, Z.~Lin, Z.~Guo, D.~Zhang, The multiscale competitive code via sparse
  representation for palmprint verification, in: IEEE Computer Society
  Conference on Computer Vision and Pattern Recognition (CVPR), 2010, pp.
  2265--2272.

\bibitem{fei2016double-orientation}
L.~Fei, Y.~Xu, W.~Tang, D.~Zhang, Double-orientation code and nonlinear
  matching scheme for palmprint recognition, Pattern Recognition 49 (2016) 89
  -- 101.

\bibitem{fei2016half}
L.~Fei, Y.~Xu, D.~Zhang, Half-orientation extraction of palmprint features,
  Pattern Recognition Letters 69 (2016) 35 -- 41.

\bibitem{xu2018discriminative}
Y.~Xu, L.~Fei, J.~Wen, D.~Zhang, Discriminative and robust competitive code for
  palmprint recognition, IEEE Transactions on Systems, Man, and Cybernetics:
  Systems 48~(2) (2018) 232--241.

\bibitem{friedman2001elements}
J.~Friedman, T.~Hastie, R.~Tibshirani, The elements of statistical learning,
  Vol.~1, Springer series in statistics New York, 2001.

\bibitem{feller2008introduction}
W.~Feller, An introduction to probability theory and its applications, Vol.~2,
  John Wiley \& Sons, 2008.

\bibitem{zhang2017toward}
L.~Zhang, L.~Li, A.~Yang, Y.~Shen, M.~Yang, Towards contactless palmprint
  recognition: A novel device, a new benchmark, and a collaborative
  representation based identification approach, Pattern Recognition 69 (2017)
  199 -- 212.

\bibitem{kumar2008incor}
A.~Kumar, Incorporating cohort information for reliable palmprint
  authentication, in: 6th Indian Conference on Computer Vision, Graphics Image
  Processing, 2008, pp. 583--590.

\end{thebibliography}


\appendix

\section{Optimal Coding Scheme} \label{sec:appendix}
Section~\ref{sec:improvement} states that ideally, the optimal coding scheme is 0-1 coding. Here we will give a proof of it. 

\newtheorem{lemma}{Lemma}
\begin{lemma}
\rm{Let ${z}$ be a function defined on probability space ${\{0,1,2,3\}}$. Let}
\begin{equation} \label{eq:a1}
	d^{'} = \frac{1}{\sqrt{2}}\frac{|z(0) - \mu_2|}{\sigma_2},
\end{equation}
where
\begin{equation} \label{eq:a2}
	\begin{aligned}
		\mu_2    &= \text{E}(z), \\
		\sigma_2 &= \sqrt{\text{Var}(z)},
	\end{aligned}
\end{equation}
then, ${d^{'}}$ reaches the maximum if and only if ${z(0) \neq z(1) = z(2) = z(3)}$.
\end{lemma}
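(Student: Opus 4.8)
The plan is to reduce the maximization to a variance-decomposition argument. First I would observe that $d'$ is invariant under affine reparametrizations $z \mapsto az+b$ (with $a \neq 0$) of the code, since both $|z(0)-\mu_2|$ and $\sigma_2$ scale by $|a|$ while the shift cancels. Hence maximizing $d'$ is equivalent to maximizing the scale-free ratio $(z(0)-\mu_2)^2/\mathrm{Var}(z)$ (note $d' \geq 0$, so it suffices to maximize $d'^2 = \tfrac12\,(z(0)-\mu_2)^2/\mathrm{Var}(z)$). I may therefore work with this ratio and assume $z(0)\neq\mu_2$, as otherwise the ratio is $0$, which is certainly not maximal; this also excludes the degenerate constant code for which $d'$ is an indeterminate $0/0$.

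The key step is to split the four outcomes $\{0,1,2,3\}$ into the two cells $A=\{0\}$ and $B=\{1,2,3\}$ and apply the law of total variance with respect to this partition, using the distribution of Table~\ref{table:1}. Writing $q=P(1)+P(2)+P(3)=5/6$ and letting $\bar z = \tfrac1q\sum_{i\in B}P(i)\,z(i)$ denote the conditional mean of $z$ on $B$, the total variance decomposes as
\[
\mathrm{Var}(z) = [\,P(0)\,(z(0)-\mu_2)^2 + q\,(\bar z-\mu_2)^2\,] + q\,\mathrm{Var}(z \mid B),
\]
a between-cell term plus a nonnegative within-cell term $q\,\mathrm{Var}(z\mid B)=\sum_{i\in B}P(i)\,(z(i)-\bar z)^2$. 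The crucial observation is that both the numerator $(z(0)-\mu_2)^2$ and the between-cell term depend on $z$ only through the pair $(z(0),\bar z)$, whereas the within-cell term is $\geq 0$ with equality precisely when $z(1)=z(2)=z(3)$.

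Consequently, holding $z(0)$ and $\bar z$ fixed, the ratio is largest exactly when the within-cell variance vanishes, i.e. when $z(1)=z(2)=z(3)$. To close the ``only if'' direction I would then evaluate the ratio when this holds: using $\mu_2=P(0)z(0)+q\bar z$ one gets $z(0)-\mu_2=q(z(0)-\bar z)$ and $\bar z-\mu_2=-P(0)(z(0)-\bar z)$, so the between-cell term equals $P(0)q(z(0)-\bar z)^2$ and the ratio collapses to the constant $q/P(0)=5$, independent of the free parameters $(z(0),\bar z)$. Since this constant is the supremum and is attained only with zero within-cell variance, any code whose values $z(1),z(2),z(3)$ are not all equal yields a strictly smaller ratio. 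The maximizing codes are therefore exactly those with $z(1)=z(2)=z(3)$, and the standing requirement $z(0)\neq\mu_2$ is equivalent to $z(0)\neq z(1)=z(2)=z(3)$, which is the claimed characterization.

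The main obstacle is the ``only if'' direction rather than exhibiting a maximizer: I must rule out trading a positive within-cell variance against a cleverly chosen $(z(0),\bar z)$ to recover the maximum. This is precisely what the computation of the constant value $q/P(0)$ settles, since it shows the between-cell contribution alone already saturates the ratio, so any extra within-cell variance can only dilute it. As a sanity check, the same conclusion follows from a Lagrange-multiplier (or Cauchy--Schwarz) minimization of $\mathrm{Var}(z)$ subject to fixed $z(0)-\mu_2$ and the mean constraint $\sum_i P(i)(z(i)-\mu_2)=0$, whose unique critical point forces $z(1)=z(2)=z(3)$; I would cite this as the alternative route confirming uniqueness of the optimizer up to affine reparametrization.
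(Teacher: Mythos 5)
Your proof is correct, and at bottom it rests on the same inequality as the paper's: the Jensen step in the paper, which lower-bounds $\sum_{i=1}^{3}P_i\bigl(z(i)-\mu_2\bigr)^2/(1-\mu_2)^2$ by $P_0^2/(1-P_0)$ subject to the mean constraint, is precisely the statement that the conditional variance of $z$ on the cell $\{1,2,3\}$ is nonnegative, which is your within-cell term. What your packaging via the law of total variance buys is threefold. It avoids the normalization $z(0)=1$, which the paper invokes ``without loss of generality'' but never fully justifies; your opening remark on affine invariance (and the exclusion of the degenerate cases $z(0)=\mu_2$ and the constant code, which the paper passes over in silence) supplies the missing justification. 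It makes the equality case $z(1)=z(2)=z(3)$ fall out as the vanishing of an explicitly nonnegative term rather than as the equality case of an inequality cited by name. Most usefully, it settles the ``only if'' direction cleanly: by computing that with zero within-cell variance the ratio collapses to the constant $q/P_0=(1-P_0)/P_0=5$ independently of the remaining free parameters $(z(0),\bar z)$, you rule out any trade-off between within-cell spread and the between-cell configuration, which is the point a reader might worry about in the paper's terser argument. One small remark: the lemma as stated does not actually fix the distribution $P$, and your argument (like the paper's) goes through verbatim for any $P$ with $P_0>0$ and $1-P_0>0$, so the specific value $q=5/6$ from Table~\ref{table:1} and the resulting bound $5$ are illustrative rather than essential to the characterization of the maximizers.
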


\begin{proof}
Let's first define the distribution function of the probability space ${\{0,1,2,3\}}$ as ${P}$. Then, the probabilities that 0,1,2, 3 appear can be denoted as ${P_0}$, ${P_1}$, ${P_2}$, and ${P_3}$, respectively. We have
\begin{equation}\label{eq:a3}
	\begin{aligned}	
		\mu_2      = & P_{0}z(0)+P_{1}z(1)+P_{2}z(2)+P_{3}z(3), \\
		\sigma_2^2 = & P_{0}(z(0)-\mu_2)^{2}+P_{1}(z(1)-\mu_2)^{2}+P_{2}(z(2)-\mu_2)^{2} \\
				     & \quad + P_{3}(z(3)-\mu_2)^{2}. 
	\end{aligned}
\end{equation}
Without loss of generality, we set ${z(0)=1}$ and prove that ${d^{'}}$ reaches maximum if and only if ${z(1) = z(2) = z(3) \neq 1}$. For convenience, we will use the square of ${d^{'}}$ and omit the constant factor. Combining Eq.~\eqref{eq:a2} and Eq.~\eqref{eq:a3}, we can get 
\begin{equation}\label{eq:a4}
	\begin{aligned}
		  & \frac{(z(0) - \mu_2)^2}{\sigma_2^2} \\
		= & (1 - \mu_2)^{2} / (P_{0}(1-\mu_2)^{2} + \\
	\quad & P_{1}(z(1)-\mu_2)^{2}+P_{2}(z(2)-\mu_2)^{2} + P_{3}(z(3)-\mu_2)^{2}) \\
		= & 1 / (P_{0}+ \\
	\quad & \frac{P_{1}(z(1)-\mu_2)^{2}}{(1-\mu_2)^{2}} + 
				\frac{P_{2}(z(2)-\mu_2)^{2}}{(1-\mu_2)^{2}} + 
				\frac{P_{3}(z(3)-\mu_2)^{2}}{(1-\mu_2)^{2}}).		      
	\end{aligned}
\end{equation}
According to Jensen's inequality, 
\begin{equation}\label{eq:a5}
	\begin{aligned}
		&\frac{P_{1}(z(1)-\mu_2)^{2}}{(1-\mu_2)^{2}}+
		 \frac{P_{2}(z(2)-\mu_2)^{2}}{(1-\mu_2)^{2}}+
		 \frac{P_{3}(z(3)-\mu_2)^{2}}{(1-\mu_2)^{2}} \\
   \geq & \frac{P_{0}^{2}}{1-P_{0}}.
	\end{aligned}
\end{equation}
Equality holds if and only if ${z(1) = z(2) = z(3) \neq 1}$. Thus, 
\begin{equation}\label{eq:a6}
	d^{'}=\frac{1}{\sqrt{2}}\frac{|z(0)-\mu|}{\sigma} \leq 
     		 \frac{1}{\sqrt{2}}\sqrt{\frac{1-P_{0}}{P_{0}}}.
\end{equation}
Equality holds if and only if ${z(0) \neq z(1) = z(2) = z(3)}$.
\end{proof}

\end{document}